\DeclarePairedDelimiter\floor{\lfloor}{\rfloor}
\DeclareMathOperator*{\argmin}{arg\,min}
\let\P\undefined
\DeclareMathOperator*{\P}{\mathbb P}
\DeclareMathOperator*{\X}{\mathbb X}
\theoremstyle{plain}
\newtheorem{theorem}{Theorem}[]
\newtheorem{lemma}{Lemma}[]
\newtheorem{corollary}{Corollary}[]
\newtheorem{assumption}{Assumption}[]
\newtheorem{example}{Example}[]
\newcommandx{\pto}[0]{\overset{P}{\to}}
\newcommand{\blind}{1}
\newcommand{\jasaformat}{0}
\begin{document}

\def\spacingset#1{\renewcommand{\baselinestretch}%
{#1}\small\normalsize} \spacingset{1}

\if1\blind
{
  \title{\bf Highly Adaptive Ridge}
  \author{Alejandro Schuler
    \hspace{.2cm}\\
    Division of Biostatistics, University of California, Berkeley\\
    \\
    Alexander Hagemeister \\
    EECS, University of California, Berkeley \\
    \\
    Mark van der Laan \\
    Division of Biostatistics, University of California, Berkeley}
  \maketitle
} \fi

\if0\blind
{
  \bigskip
  \bigskip
  \bigskip
  \begin{center}
    {\LARGE\bf Highly Adaptive Ridge}
\end{center}
  \medskip
} \fi

\bigskip
\begin{abstract}
 In this paper we propose the Highly Adaptive Ridge (HAR): a regression method that achieves a $n^{-1/3}$ dimension-free
 $\mathscr L_2$ convergence rate in the class of right-continuous functions with square-integrable sectional derivatives. This is a large nonparametric function class that is particularly appropriate for tabular data.
 HAR is exactly kernel ridge regression with a specific data-adaptive kernel based on a saturated zero-order tensor-product spline basis expansion.
 We use simulation and real data to confirm our theory. We demonstrate empirical performance better than state-of-the-art algorithms for small datasets in particular.
\end{abstract}

\noindent%
{\it Keywords:}  
nonparametric regression, high-dimensional regression, convergence rate
\vfill

\newpage
\ifnum\jasaformat=1
    \spacingset{1.9} 
\fi

\section{Introduction}

In regression our task is to find a function that maps features to an outcome such that the expected loss is minimized \cite{Hastie2009-ix}. In the past decades a huge number of flexible regression methods have been developed that effectively search over high- or infinite-dimensional function spaces. These are often collectively called ``machine learning'' methods for regression. 

$\mathscr L_2$ convergence is a well-studied property of regression algorithms that measures how quickly generalization MSE decreases as the size of the training sample increases.\footnote{
In this paper we are speficically concerned with convergence in $\mathscr L_2(\X)$ where $\X$ is the distribution of the covariates. This is the generalization MSE. Other authors sometimes study $\mathscr L_2(\mu)$ convergence which is only related if $\mu \ll \X$.} 
Faster convergence rates (asymptotically) guarantee more efficient use of limited data. In many causal inference settings fast rates must also be assumed to build valid confidence intervals \cite{Tsiatis2007-py, Van_der_Laan2003-du}. 

Here we present a new machine learning method (highly adaptive ridge; HAR) that achieves a $O_{\P}(n^{-1/3}(\log n)^{2(p-1)/3})$ $\mathscr L_2$ convergence rate under mild conditions on the true data-generating process. This rate is remarkable because it is dimension-free (up to log factors) as a consequence of the assumed function class. The proposed method has close ties with the highly adaptive lasso \cite{hal, hal-og}, lassoed tree boosting \cite{ltb}, and kernel ridge regression \cite{krr}.

\section{Notation and Preliminaries}

Throughout the paper we adopt the empirical process notation $\P f = \int f(Z) \, d\P$ and $\P_n f = \frac{1}{n}\sum_i f(Z_i)$. In this notation these operators do not average over any potential randomness in $f$ so $\P f_n$ is a random variable if $f_n$ is a random function learned from data.  We use $\|f\|$ to indicate an $\mathscr L_2(\P)$ norm $\sqrt{\P f^2}$ unless otherwise noted. 

Let $X_i,Y_i \in \mathcal X \times \mathds R$ be IID across $i$ and with a generic $X,Y$ that have some joint distribution $\P$ and let $\X$ denote the distribution of the covariates $X$. 
We will take $\mathcal X = [0,1]^p$ without loss of generality for applications with bounded covariates. 
Throughout let $\mu$ refer to the Lebesgue measure on $[0,1]^p$.

Let $L$ be some loss (e.g. mean-squared error), which we construct such as to take a prediction function $f$ as an input and return a function of $X,Y$ as output. For example, if we want $L$ to be squared error loss, we let $L(f)(X,Y) = (f(X) - Y)^2$. Throughout we abbreviate $Lf = L(f)$.
Let $f = \argmin_{f:\mathcal X \to \mathds R} \P Lf$.
This is the standard regression setup where our goal is to estimate the function $f$ from $n$ samples of the vector of predictors $X$ and the outcome $Y$. 

\paragraph*{Sectional Variation}

Let $s\subseteq \{1 \dots p\}$ be some set of coordinates that we call a \textit{section}. When we write $x_s$ we mean a vector identical to $x$ except for that the elements of $x$ not in $s$ have been set to 0. By $x_{-s}$ we mean the same except for the complement of $s$. We write $f(x_s, x_{-s})$ to mean $f(x_s + x_{-s})$. The latter is technically correct but the former makes it easier to understand the intuition.

Let $[a,b]$ be a cube in $\mathds R^p$ and define the generalized difference (also called ``quasi-volume'' or ``alternating sum'') $\Delta([a,b]) = \sum_{s\subseteq \{1\dots p\}} (-1)^{|s|}f(a_s, b_{-s})$. The vectors $a_s + b_{-s}$ are nothing other than the corners of the cube $[a,b]$. This should be familiar: if $f$ is taken to be a multivariate CDF of a random variable $Z$, the generalized difference expresses the probability that $Z \in (a,b]$.

Let $\mathcal P$ represent a partition of the domain of $X$ into a grid of cubes $[a_i, b_i]$.
The \textit{Vitali variation} of a function $f$ is $V(f) = \sup_{\mathcal P} \sum_{[a_i,b_i] \in P} \Delta([a_i, b_i])$. 
The Vitali variation captures a nice notion of global variation but it has some unintuitive behavior. 
For example, if $f(x_1, x_2) = 1/x_1$, which is unbounded, then the Vitali variation is \textit{zero}. That motivates the notion of \textit{sectional variation} (also called \textit{Hardy-Krause variation}).\footnote{
This is the variation ``anchored at 0''. Other authors sometimes use different conventions \cite{Owen2005-er, Fang2019-or, Aistleitner2014-cc}.} 
For any section $s$ let $f_s: x_s \to f(x_s)$ defined over the domain $\mathcal X_s = \prod_{j\in s} (0,1]_j\prod_{j \not\in s} \{0\}$ be the value of the function $f$ along the $s$-``face'' of $[0,1]^p$. 
The sectional variation of $f$ is defined to be $\sum_s V(f_s)$ where $V(f_{\{\emptyset\}}) = f(0)$ by convention. 
The sectional variation defined this way is a norm and we use the notation $\|\cdot \|_v$ to distinguish sectional variation norm from a standard $\mathscr L_2$ norm.\footnote{
In our definition we penalize the value $f(0)$ which makes this a norm. In other definitions the section over the empty set is not penalized and the sectional variation is a semi-norm \cite{Owen2005-er}.} 
Given some constant $M$, we use $\mathscr{K}(M)$ ($\mathscr K$ for Hardy-\textbf{K}rause) to denote  functions with $\| f\|_v \le M$. 
Owen 2005 \cite{Owen2005-er} provides an excellent summary of these topics including didactic proofs.

In higher dimensions we say that a function is \textit{right-continuous} if the univariate functions $x_j \mapsto f(x_{\{j\}} , x_{-\{j\}})$ are continuous for all $j$ and all values in $\mathcal X_{-\{j\}}$.
For right-continuous\footnote{
A \textit{cadlag} function is a right-continuous function with left-hand limits. In some recent work \cite{hal, hal-og, vanderlaan2023higher, ltb} the authors consider \textit{cadlag} functions of bounded sectional variation instead of \textit{right-continuous} functions of bounded sectional variation. The two classes are the same, however, because the left-hand limits are redundant. Right-continuous functions of bounded variation can be decomposed into two bounded \textit{entirely monotone} functions which have both right and left limits \cite{Aistleitner2014-cc}.
} functions the sectional variation norm of a function $f$ on $[0,1]^p$ is given by
$\sum_{s \subseteq \{1\dots p\}}\int_{0}^{1} |df_s(x)|$. Let $\tilde{\mathscr K}$ denote the subset of functions of bounded HK variation $\mathscr K$ that are also right-continuous and let $\tilde{\mathscr K}(M)$ be the right-continuous functions with sectional variation norm bounded by $M$.\footnote{
In many cases the right-continuity assumption is also redundant. Functions of bounded sectional variation have a Jordan decomposition into the sum of two \textit{entirely monotone} functions \cite{Young1924-jb, Aistleitner2014-cc}. Any discontinuities in these functions exist on a set $E$ of $\mu$-measure zero \cite{Young1924-jb, Aistleitner2017-zm} and thus we lose nothing in an $\mathscr L_2(\P)$ sense by assuming right-continuity because estimators will simply converge to a representative function $\tilde f = f$ $\P$-a.e. (as long as $\P$ does not have positive mass on $E$). But in many cases that assumption will not hold: for example, by US law there is a discontinuity in pay after working more than 40 hours/week and precisely as a result of this there is a positive mass of employees who work exactly 40 hours because employers cap overtime.
}
The utility of having right-continuity is that $f \in \tilde{\mathscr K}$ correspond 1-to-1 with finite signed measures the same way that cumulative distribution functions correspond with probability measures \cite{Aistleitner2014-cc}. This makes it possible to define Lebesgue-Stieltjes integrals $\int \cdot\, df(x)$ the same way that one speaks about integration with respect to a CDF $\int \cdot\, d\X(x)$. 

In this paper we further assume that the Radon-Nikodym derivatives $df_s/d\X_s$ are defined and in $\mathscr L_2(\X_s)$ for all $s$ ($\X_s(x_s) = \P (X_s \le x_s)$ being the marginal CDF of $X_s$).
The derivatives $df_s/d\X_s$ can be thought of as mixed derivatives. For example if $X$ is uniform then $\frac{df_{\{1,2\}}}{d\X_{\{1,2\}}} = \frac{\partial^2 f}{\partial x_1 \partial x_2}$ which exists almost everywhere. 
The mean-squared integrability of this derivative actually implies the bounded sectional variation by itself because $\sum_{s \subseteq \{1\dots p\}}\int_{0}^{1} |df_s(x)| = \sum_{s \subseteq \{1\dots p\}}\int_{0}^{1} \left|\frac{df_s(x)}{d\X_s}\right|d\X_s$ which must be finite by our assumption because $\mathscr L_2 \subset \mathscr L_1$. We will use $\tilde {\mathscr K}_2^{(1)}$ to refer to right-continuous functions with these first-order (mixed) derivatives in $\mathscr L_2$. In the appendix we generalize this notation and show how $\tilde {\mathscr K}_2^{(1)}$ is closely related to the \textit{first-order Sobolev class with dominating mixed derivatives} $\mathscr S_2^{(1)}$ \cite{simon}. However, to keep notation simple, we will use the alias $\mathscr F = \tilde {\mathscr K}_2^{(1)}$ and $\mathscr F(M) = \tilde {\mathscr K}_2^{(1)}(M)$ for the remainder of the main body and in our proofs.

\paragraph{Motivation for This Function Class}
The utility of assuming bounding sectional variation is that we can assure a faster $\mathscr L_2$ convergence than is possible in traditional smoothness classes without being as restrictive as assuming additive structure. The minimax rate in a Hölder class with smoothness $\beta$ is well-known to be $n^{-\beta/(2\beta + p)}$ \cite{Stone1982-up}. This rate suffers from the curse of dimensionality due to the strong dependence on $p$. In contrast, the minimax rate for additive functions $f(x) = \sum f_j(x_j)$ with $f_j$ Lipschitz is $n^{-1/3}$. This entirely dimension-free rate is bought at the cost of a very strong assumption, however. For right-continuous functions of bounded sectional variation the minimax rate is $n^{-1/3} (\log n)^{2(p-1)/3}$ (up to log factors) \cite{Fang2019-or}. This looks like the rate for additive functions except for the fact that the dimension incurs a cost in the log factor.

One way to understand this is that bounding the sectional variation (Vitali variation, actually) limits the amount of multi-variable ``interactivity'' that is allowed. This is easiest to see for continuous differentiable functions of two dimensions for which the Vitali variation takes the simple form $\int \left| \frac{\partial^2f}{\partial x_1 \partial x_2} \right| d(x_1,x_2)$. It is clear how this penalizes the amount of sub- or super-additivity: an additive function has zero mixed derivative everywhere. Bounding this variation therefore results in a function class with members that behave more like sums of univariate functions of each  of their inputs. The larger the variation norm is allowed to be, the more ``interactivity'' is allowed between variables. Assuming bounded sectional variation therefore strikes a nice middle ground between assuming general smoothness  and assuming exact additive structure \cite{tps-anova, simon, Aistleitner2014-cc}. 

In practice this is an excellent model to use when it is known that most of the variation in the outcome is due to variation along the $x_j$ axes, as it often is for tabular data (e.g. econometric, healthcare, business). It is not sensible when the individual features by themselves give very little information about the outcome, as is the case for example when classifying images from pixel values. Variation in each pixel is irrelevant; what matters is precisely the local ``interactions'' at different scales, which is why algorithms (e.g. CNNs) that assume the regression function is a composition of low-dimensional functions are so successful in those settings \cite{Poggio2016-tr}.

Our purposes require the slightly stronger assumption that the mixed derivatives exist (in terms of Radon-Nikodym derivatives) and are square-integrable in order to arrive at an estimator with good computational properties.

\section{Method}

Highly adaptive ridge performs a ridge regression in a (data-adaptive) high-dimensional expansion $H(x)$ of the covariates. The estimated function $\hat f_n$ is the empirical minimizer of a loss function $L$ in the parametric model $\{H(x)^\top \beta : \| \beta \|^2 \le M\}$. The bound $M$ is chosen by cross-validation  (suppressed in the notation). 

The high dimensional basis expansion $H$ is constructed as follows. As before, let $s \subseteq \{1 \dots p\}$ denote a ``section'', i.e. some subset of the dimensions of $[0,1]^p$. Let 
$$h_{i,s}(x) = \prod_{j \in s} 1(X_{i,j} \le x_j)$$
be a single, scalar-valued basis function indexed by $i$ and $s$. Here and in what follows, we use the convention that $\Pi_{j \in \emptyset} u_j = 1$ so $h_{i,\emptyset}(x) = 1$ all give an ``intercept'' term.
The bases $h_{i,s}$ are standard tensor-product zero-order splines each with ``knot point'' $c_{i,s} \in [0,1]^p$ where each element of the knot is $c_{i,s,j} = X_{i,j}$ if $j \in s$ and $c_{i,s,j}=0$ if $j \not\in s$. In other words, the knot $c_{i,s}$ is the vector $X_i$ with the non-$s$ elements set to 0. With this notation we can write $h_{i,s}(x) = 1(c_{i,s} \le x)$ where the inequality must hold in all dimensions.
The bases $h_{i,s}$ are data-dependent (random) because $X_i$ is an observed data point. Our full list of basis functions is 
$$H = [h_{i,s} : i \in \{1 \dots n\}, s\subseteq \{1\dots p\}].$$

We use $d$ to refer to the number of basis functions $|H|$, which is $n2^{p}$ (there are $2^p$ sections and $n$ ``knots'' per section). Technically the number of bases can be smaller if there are ``ties'' in the data and certainly we have already over-counted the intercept term $n-1$ times. To keep the notation clean, however, we consider $H$ to be a multiset that allows repeated elements and we can proceed with the exact equality $d = n2^p$. This will make no difference in the computations and theory that follows.

Formally, the HAR estimator is
\begin{align*}
    \hat f_n &= \argmin_{f \in {\mathscr F}_{n}(M) } \mathbb P_n L f \\ 
    {\mathscr F}_{n}(M) &= 
    \left\{
    \begin{array}{l}
        H(x)^\top \beta \\
        \text{s.t.}\  \|\beta\|^2 \le M_n
    \end{array}
    \right\}
\end{align*}
This is identical to the highly adaptive lasso estimator except for that the constraint on the coefficients is on a 2-norm, not a 1-norm \cite{hal-og, hal}.

\subsection{Convergence Rate}
Our main theoretical contribution is to show that the described algorithm converges quickly in $\mathscr L_2$ norm to the truth under mild assumptions on the data generating process.

\begin{theorem}
Define the ``truth'' $f = \argmin_{\{g:[0,1]^p \to \mathds R\}} \P Lg$ for a loss function $L$. Let our model be $\mathscr F_n(M_n) = \{H(x)^\top\beta :  \|\beta\|^2 \le M_{n}\}$ and our estimate be $\hat f_n = \argmin_{g \in \mathscr F_n(M_n)} \P_n L g$. 

If (1) the loss function is Lipschitz and $\|g-f\|^2 \lesssim \P (Lg - Lf) \lesssim \|g-f\|^2$, (2) $f$ is right-continuous with $df_s/d\X_s \in \mathscr L_2$ for all $s$, i.e. $f\in \mathscr F$, and (3) $M_n$ is chosen via cross-validation from a grid of values $\mathcal M_n = [M_{n,1} \dots M_{n,K}]$ such that $\exists k_n, \bar M, N : M \le n^{-1}M_{n,k_n} \le \bar M$ for $M \ge \sqrt{\|f\|_v2^p}$ and $n>N$, then $\|\hat f_n -f \| = O_{\P}(n^{-1/3} (\log n)^{2(p-1)/3})$.
\end{theorem}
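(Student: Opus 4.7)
My plan is to follow the standard four-step empirical-process template: (1) construct an approximation $\bar f_n \in \mathscr F_n(M_{n,k_n})$ close to $f$; (2) invoke the basic ERM inequality; (3) bound the residual empirical process via bracketing entropy of the model class; and (4) transfer the rate to the cross-validated estimator.

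For step (1), the Lebesgue--Stieltjes representation available for $f \in \mathscr F$,
$$ f(x)=\sum_{s}\int \mathbf 1(u\le x_s)\,\frac{df_s}{d\X_s}(u)\,d\X_s(u),$$
suggests the Monte Carlo plug-in
$$ \bar f_n(x)=\sum_s \frac{1}{n}\sum_{i=1}^n \frac{df_s}{d\X_s}(X_{i,s})\,h_{i,s}(x)=H(x)^\top\bar\beta, \quad \bar\beta_{i,s}=n^{-1}\frac{df_s}{d\X_s}(X_{i,s}).$$
A direct second-moment calculation gives $\mathbb E\|\bar f_n-f\|^2\lesssim n^{-1}$, and the law of large numbers applied to $\|\bar\beta\|^2=n^{-2}\sum_{i,s}(df_s/d\X_s)^2(X_{i,s})$ gives $\|\bar\beta\|^2=O_{\P}(n^{-1})$, so $\bar f_n$ sits inside $\mathscr F_n(M_{n,k_n})$ with probability tending to one, since $M_{n,k_n}\ge nM$.

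Let $\tilde f_n$ denote the ERM at the oracle grid index $k_n$. From $\P_n L\tilde f_n\le \P_n L\bar f_n$ and condition~(1) I get
$$ \|\tilde f_n-f\|^2\lesssim (\P-\P_n)(L\tilde f_n-L\bar f_n)+\|\bar f_n-f\|^2\lesssim (\P-\P_n)(L\tilde f_n-L\bar f_n)+O_{\P}(n^{-1}),$$
so everything reduces to the empirical-process term. Lipschitz contraction (Ledoux--Talagrand) allows me to dominate it by $\sup\{|(\P-\P_n)(g-\bar f_n)|:g\in\mathscr F_n(M_{n,k_n})\}$. Since each $h_{i,s}$ has unit sectional variation, any $g=H^\top\beta\in\mathscr F_n(M_{n,k_n})$ lies in $\tilde{\mathscr K}(V)$ with $V\le\|\beta\|_1$, so I can apply the Fang--Guntuboyina--Sen/Gao--Wellner bracketing-entropy bound
$$ \log N_{[\,]}(\epsilon,\tilde{\mathscr K}(V),\mathscr L_2(\P))\lesssim \frac{V}{\epsilon}\bigl(\log(V/\epsilon)\bigr)^{2(p-1)},$$
chain through Dudley's integral, and solve the fixed point $r^2\asymp n^{-1/2}J(r)$ to obtain $r\asymp (V/n)^{1/3}(\log n)^{2(p-1)/3}$. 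A standard cross-validation oracle inequality then transfers this rate from $\tilde f_n$ to $\hat f_n$ at no asymptotic cost.

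The main obstacle is step (3), specifically pinning down the effective variation radius $V$. A naive Cauchy--Schwarz bound gives $\|\beta\|_1\le\sqrt{n\,2^p\,M_{n,k_n}}=O(n)$, so $V=O(n)$, which only recovers a constant-order rate. To obtain $V=O_{\P}(1)$ I would combine a peeling argument with the observation that coefficient vectors producing functions close in $\mathscr L_2(\P)$ to the Monte Carlo approximation $\bar f_n$ (whose own $\ell_1$ norm concentrates near $\|f\|_v$) must themselves have $O_{\P}(1)$ sectional-variation footprint, exploiting that $\tilde f_n$ is driven toward $\bar f_n$ by the ERM criterion rather than toward the boundary of the large $\ell_2$-ball. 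Making this localized-variation step rigorous is where most of the technical effort will lie.
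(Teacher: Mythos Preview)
Your overall template---oracle approximation, ERM inequality, entropy bound, CV oracle---matches the paper's, and your $\bar f_n=H^\top\bar\beta$ with $\bar\beta_{i,s}=n^{-1}(df_s/d\X_s)(X_{i,s})$ is exactly the paper's approximation. The gap is in step~(3), and it is caused by the scaling you assign to the $\ell_2$ bound. You read the hypothesis as $M_{n,k_n}\ge nM$, i.e.\ a \emph{growing} constraint; the paper's proof uses the opposite scaling $M_{n,k_n}=\Theta(n^{-1})$ (the ``$n^{-1}M_{n,k_n}$'' in the statement is inconsistent with the ``$nM_{n,k_n}$'' in the appendix and should be read as the latter). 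With $\|\beta\|^2\le Cn^{-1}$ and $d=n2^p$ bases, Cauchy--Schwarz gives
\[
\|\beta\|_1\le\sqrt{d\cdot Cn^{-1}}=\sqrt{C\,2^p}=O(1)
\]
directly, so every element of $\mathscr F_n(M_{n,k_n})$ has sectional variation bounded by a fixed constant and the whole model sits inside a fixed Donsker class $\tilde{\mathscr K}(\sqrt{C\,2^p})$. Your entropy/fixed-point computation then runs with $V=O(1)$ and no further work is needed. The bound $\|\bar\beta\|^2=O_\P(n^{-1})$ that you already established is precisely what guarantees $\bar f_n$ still fits inside this \emph{shrinking} ball; this is the role of the assumption $df_s/d\X_s\in\mathscr L_2$ and the reason the paper stresses that the $\ell_2$ bound must shrink at the right rate and that cross-validation must be able to find such a value.

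Your proposed workaround---arguing that the ERM $\tilde f_n$ must have $O_\P(1)$ sectional variation because it is $\mathscr L_2$-close to $\bar f_n$---cannot be made rigorous: $\mathscr L_2(\P)$ proximity does not control sectional variation (one can add a high-variation, small-$\mathscr L_2$-norm perturbation), and an $\ell_2$ ball of radius $\Theta(n)$ in $\mathds R^{n2^p}$ contains coefficient vectors with $\ell_1$ norm of order $n$ whose associated functions are arbitrarily close to $\bar f_n$ in $\mathscr L_2$. No peeling over the large ball will localize variation; the shrinking $\ell_2$ bound is the mechanism, not a convenience.
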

The proof is given in the appendix. In brief, we use an oracle approximation $f_{n} = \sum_s \int_0^x \frac{df_s}{d\X_s} d\X_{s,n}$ ($\X_{s,n}$ being the empirical CDF of $X_s$) and show that $f_n \to f$ suitably quickly while the squared $\mathscr L_2$ norm of the ``coefficients'' of this function shrink quickly. That ensures $\|f_n\|_v$ remains bounded. We then use empirical process theory to show that the discrepancy between the estimate $\hat f_n$ and approximation $f_n$ also disappears quickly.

The required conditions on the loss $L$ are mild and  satisfied by mean-squared error and binary log-loss (see appendix). Our assumed function class is large and diverse so this restriction is also mild. The condition on the grid of 2-norm bounds for $\beta$ can be satisfied in practice by choosing a large and fine-enough grid.

The theorem above extends trivially to cover mixtures of 1- and 2-norm penalties on the coefficients of the bases (a ``highly adaptive elastic net''). However the pure 2-norm penalty comes with unique computational benefits for the squared-error loss function.

\subsection{Computation}

The constrained minimization problem in $\beta$ described above is most often solved using the Lagrangian formulation 
$$
\hat \beta = \argmin_\beta \ \mathbb{P}_n L(H(X)^\top \beta)  + \lambda \|\beta\|^2
$$ 
which for mean-squared error loss has the closed-form solution $\hat\beta = (\bm H \bm H^\top + \lambda I_{d})^{-1} \bm H^\top \bm Y$ where $\bm H^\top = [H(X_1), \dots H(X_n)]$ and $\bm Y^\top = [Y_1 \dots Y_n]$. 

Because there are $d = n2^p$ columns in the ``predictor'' matrix $\bm H$ this problem is impossible to solve computationally for even moderate values of $p$. Even instantiating the array in memory can be prohibitive. However, an application of the Woodbury matrix identity reveals the equivalent expression $\hat\beta = \bm H (\bm H^\top \bm H + \lambda I_n)^{-1} \bm Y$, meaning that a prediction at $x$ can be computed as $\hat f_n(x) = H(x)^\top \bm H (\bm H^\top \bm H + \lambda I_n)^{-1} \bm Y$. The advantage of this is that prediction at a point depends only on inner products of the form $H(x)^\top H(x')$. We can analytically work out the kernel function that computes this inner product directly from the lower-dimensional $x,x'$ and avoid ever having to instantiate $\bm H$ or invert a $d \times d$ matrix:
\begin{align*}
    K(x,x') 
    &= H(x)^\top H(x') \\
    &= \sum_i\sum_s \left(\prod_{j\in s} 1(X_{i,j} \le x_j)\right)\left(\prod_{j\in s} 1(X_{i,j} \le x'_j)\right) \\
    &= \sum_i\sum_s \prod_{j\in s} 1(X_{i,j} \le (x \land x')_j) \\    
    &= \sum_i\sum_{s \subseteq s_i(x,x')} 1 \\    
    &= \sum_i 2^{|s_i(x,x')|}
\end{align*}


where $x \land x'$ denotes the elementwise minimum and $s_i(x,x')$ = $\{j: 1(X_{i,j} \le (x \land x')_j\}$. The middle equality follows because the product term is 1 only if $s\subseteq s_i(x,x')$ and 0 otherwise. This is a simple computation: we compare the point $x\land x'$ to each $X_i$ and count the number of dimensions in which the former is greater than or equal to the latter. This does not require us to compute the basis expansions $H(x)$ or the values of $\hat\beta$ and is thus more scalable to high dimensions than explicitly computing the design matrix with $n2^p$ columns, which is almost impossible for e.g. $p>25$. Unfortunately, however, due to the dependence on both test points ($x,x'$) and each knot point the construction of the overall kernel matrix is an $O(n^3)$ operation. This operation is the bottleneck in terms of runtime but it is trivially parallelized.

Because the kernel is constructed data-adaptively there are no additional tuning parameters. This saves time in cross-validation relative to other choices of kernel. As with other kernel methods, cross-validation over the regularization parameter in HAR can be done efficiently using an exact closed-form expression for leave-one-out CV \cite{cawley2005estimating}. Technically this is a slight abuse because the kernel is data-adaptive so it should be recomputed without the $i$th knot point but this does not make a meaningful difference. In appendix \ref{sec:krr-reg} we describe the simple method we use to set the values of the regularization $\lambda$ to cross-validate over.

\subsection{Higher-Order HAR}
Under additional smoothness assumptions we are able to improve the HAR rate towards the parametric $n^{-1/2}$ rate. Instead of assuming square integrable first sectional derivatives we can assume square-integrable sectional derivatives of higher orders. This naturally motivates an estimator identical to the above but with different set of data-adaptive basis functions. The details of these function classes are given in the appendix and are largely reproduced from \cite{vanderlaan2023higher}. Our novelty, also presented in the appendix, is in showing how these function classes can be used for HAR. The upshot is that we obtain $n^{-(t+1)/(2t+3)}$ $\mathscr L_2$ rates up to $\log n$ factors by assuming $t$ orders of ``smoothness''. We also show how the resulting estimators can be kernelized.

\subsection{Related Work}
HAR is closely related to HAL, the highly adaptive lasso \cite{hal, hal-og, Fang2019-or, vanderlaan2023higher}. In HAL the estimator is the empirical minimizer of a loss function $L$ in the parametric model $\{H(x)^\top \beta : \| \beta \|_1 \le M\}$. As implied by the names, HAL penalizes the 1-norm of the coefficients while HAR penalizes the 2-norm. HAL achieves the same fast convergence rate as HAR but HAL suffers from a computational curse of dimensionality because the basis matrix $\bm H$ must be explicitly computed. Moreover lasso problems are generally much slower to solve than ridge problems, even absent the use of the kernel trick. 

The extension of the rate result from HAL to HAR is not trivial. Changing from a lasso to a ridge penalty fundamentally changes the function class being considered. HAR with a fixed 2-norm bound on the coefficients does not work: as the size of the dataset increases, the number of bases expand as well and the HAR function class quickly becomes much bigger than any class of right-continuous functions of bounded sectional variation. It is therefore essential to shrink the bound at a certain rate to keep the model inside this Donsker class and it must be proved that this does not then eliminate any relevant functions from consideration. It is critical that cross-validation is able to do this (as we show it is) because manual control over the 2-norm bound is difficult to achieve using the Lagrangian formulation of the optimization problem. This is why HAR requires cross-validation and the first-order smoothness assumption $df_s/d\X_s \in \mathscr L_2$ to prove the rate result. This is not required by (0th-order) HAL. 

Previous work demonstrated a close connection between HAL and gradient boosted trees and exploited this to construct a rate-preserving boosting algorithm called Lassoed Tree Boosting (LTB) \cite{ltb}. LTB is more general in the sense that the computational benefit is not limited to squared error loss as is the case for HAR. The conceptual advantage of HAR over LTB is that HAR provides direct empirical minimization over $H(x)^\top\beta$ whereas LTB must iteratively ``boost'' a sequence of bases and repeatedly find the optimal linear combination. Like HAL, however, LTB does not require the first-order smoothness condition required by HAR to prove the fast rate.

HAR is very closely related to previous work on estimation in tensor product Sobolev spaces \cite{tps-anova, simon, Shi2024-uh}. These papers exploit the same idea (penalizing interactions) to achieve similar dimension-free rates. In particular \cite{simon} proposes kernel ridge regression with the following ``mixed Sobolev'' kernel:
$$
K(x,x') = \frac{1}{\sinh(1)^p} \prod^p_j \cosh(x_j \land x'_j)\cosh(1-x_j\lor x'_j).
$$
Our theoretical results are distinct in that we work with right-continuous functions that lend themselves to extremely concise proofs because of the connection between right-continuous functions of bounded sectional variation and finite signed measures. We are also more easily able to derive the more-relevant $\mathscr L_2(\P)$ rates instead of $\mathscr L_2(\mu)$ rates without imposing absolute continuity on the distribution of $X$. In addition, we provide results for higher-order smoothness conditions. In appendix \ref{sec:classes} we compare and contrast various function spaces.

Computationally, HAR is also a form of kernel ridge regression (KRR) \cite{krr}. The twist is that the kernel function for HAR is constructed automatically by the algorithm based on the data instead of being chosen by the user. Therefore, unlike typical kernel ridge, HAR is not doing exact empirical risk minimization so the standard theory does not apply. In this sense HAR is somewhat related to ``kernel learning'' methods \cite{kernel-learning}. Previous results on convergence rates for kernel ridge regression are found in \cite{krr-rates-1, krr-rates-2, krr-rates-3}. 

\section{Demonstration}

First we give a qualitative visualization of HAR in action, compared to some other methods. We drew 50 points from the data-generating process:
\begin{align*}
    X &\sim \text{Unif}([-1,1]) \\
    Y &= \mathcal N(0,0.3^2) + \begin{cases}
        -x & x\le0\\
        \sin(2\pi x) & x> 0
    \end{cases}
\end{align*}
We evaluated HAR, HAL \cite{hal}, radial basis kernel ridge, kernel ridge with a mixed Sobolev kernel \cite{simon}, random forest, and ridge regression. We included HAL as a baseline because the function class that HAR minimizes over is closely related to that of HAL. Since HAR is computationally a kernel ridge method, we also compare to kernel ridge methods with other kernels (included the closely related mixed Sobolev kernel; see appendix \ref{sec:classes}). Standard ridge and random forest are also included as baselines since they are extremely commonly used and easy to use out-of-the-box without much tuning.

We implemented the three kernel ridge methods (HAR, radial basis, and mixed Sobolev) and did model selection over regression strength over a grid of 50 points logarithmically spaced in $[0,\lambda_0]$ (see appendix \ref{sec:krr-reg}) using the closed-form leave-one-out CV error expression \cite{cawley2005estimating}. We also tuned the bandwidth for radial basis, testing values from 0.001 to 10. For HAL and ridge we tuned the regularization using internal 5-fold cross-validation. For the random forest we used 2,000 trees and all other parameters at their defaults in sklearn \cite{scikit-learn}.

We repeated the entire process three times and visualized the truth and predictions on a test set, shown in figure \ref{fig:fits}. It is evident that, like HAL and random forests, HAR produces piecewise constant fits. As expected, the jumps are more numerous and smaller than the jumps for HAL because of the ridge vs. lasso regularization. The fits from HAR also look quite similar to those from the mixed Sobolev KRR.

\begin{figure*}[ht]
    \centering
    \includegraphics[width=6in]{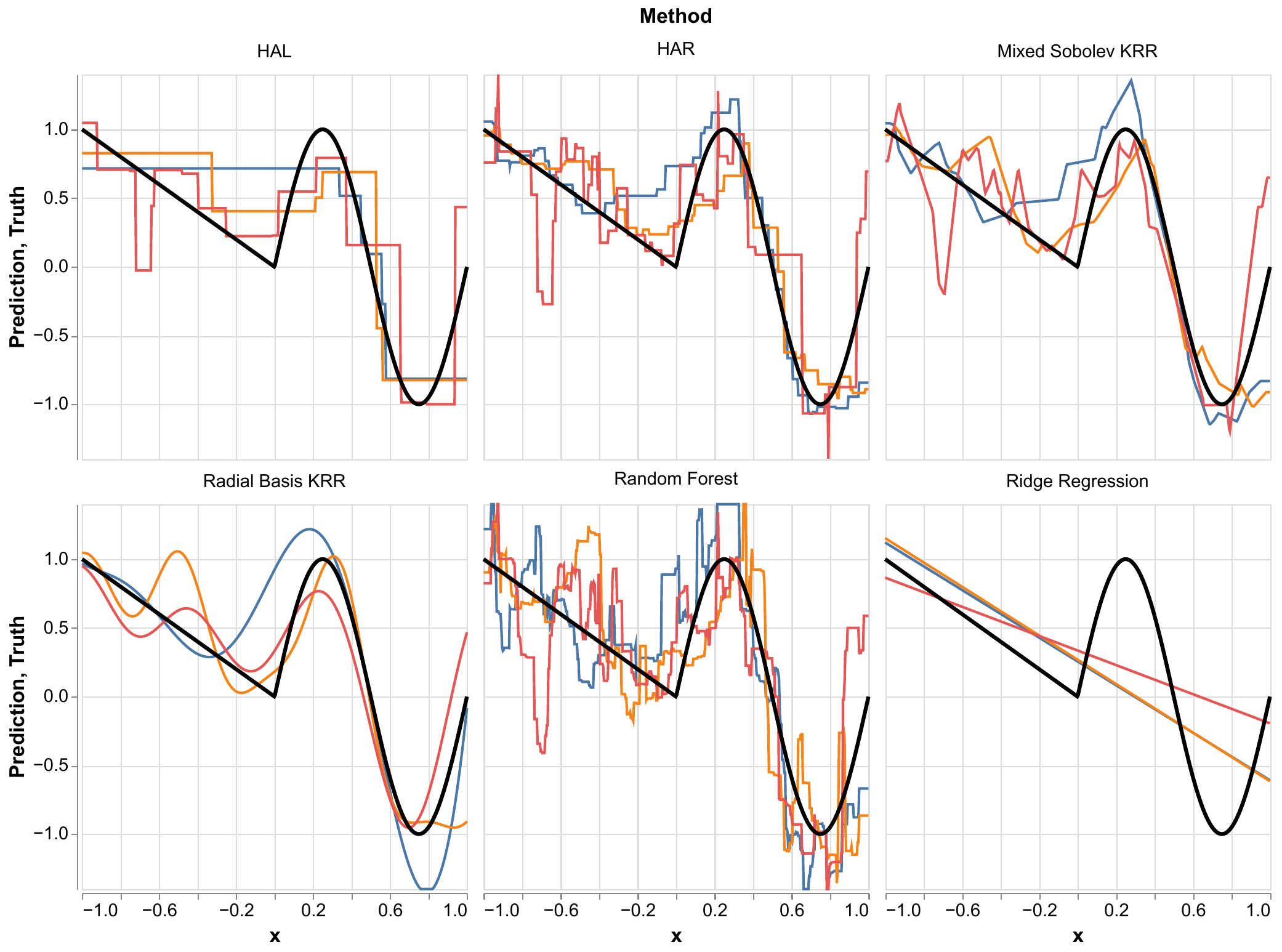}
    \caption{Fits of HAR and other methods on simple one-dimensional data.}
    \label{fig:fits}
\end{figure*}

\subsection{Convergence Rate in Simulation}
Our primary theoretical result concerns the convergence rate of HAR in mean-squared error so we test this using a simple simulation. Our data-generating process is as follows:
\begin{align*}
    X &\sim \text{Unif}([0,1]^{10}) \\
    Y &= \prod_{j=1}^5 X_j 
        -\prod_{j=6}^{10} \left(
            0 \lor
            \frac{X_j - x_0}{\epsilon} 
            \land 1
        \right)  
        + \mathcal N(0,0.1^2).
\end{align*}
The true outcome regression is a sum of two 5-way interactions, one of which is smooth and one of which is (almost) a jump. We set $\epsilon = 0.05$ to make the cliff relatively sharp and let and $x_0 = 1 - (1/2)^{1/5} - \epsilon$ so that approximately half the data would be on either side of the jump. This scenario is meant to be challenging because of the high-dimensional interactions and the different scales at which they operate.

We drew datasets of increasing $n$ from this data-generating process, trained HAR (tuning regularization as described above), and evaluated RMSE on a large test set. This process was repeated 10 times and we took an average of the test set errors for each $n$. We divided these RMSEs by the theoretical rate $n^{-1/3} (\log n)^{2(p-1)/3}$. The result (figure \ref{fig:convergence}) confirms that we have convergence faster than the advertised rate for this data-generating process.

\begin{figure*}[ht]
    \centering
    \includegraphics[width=6in]{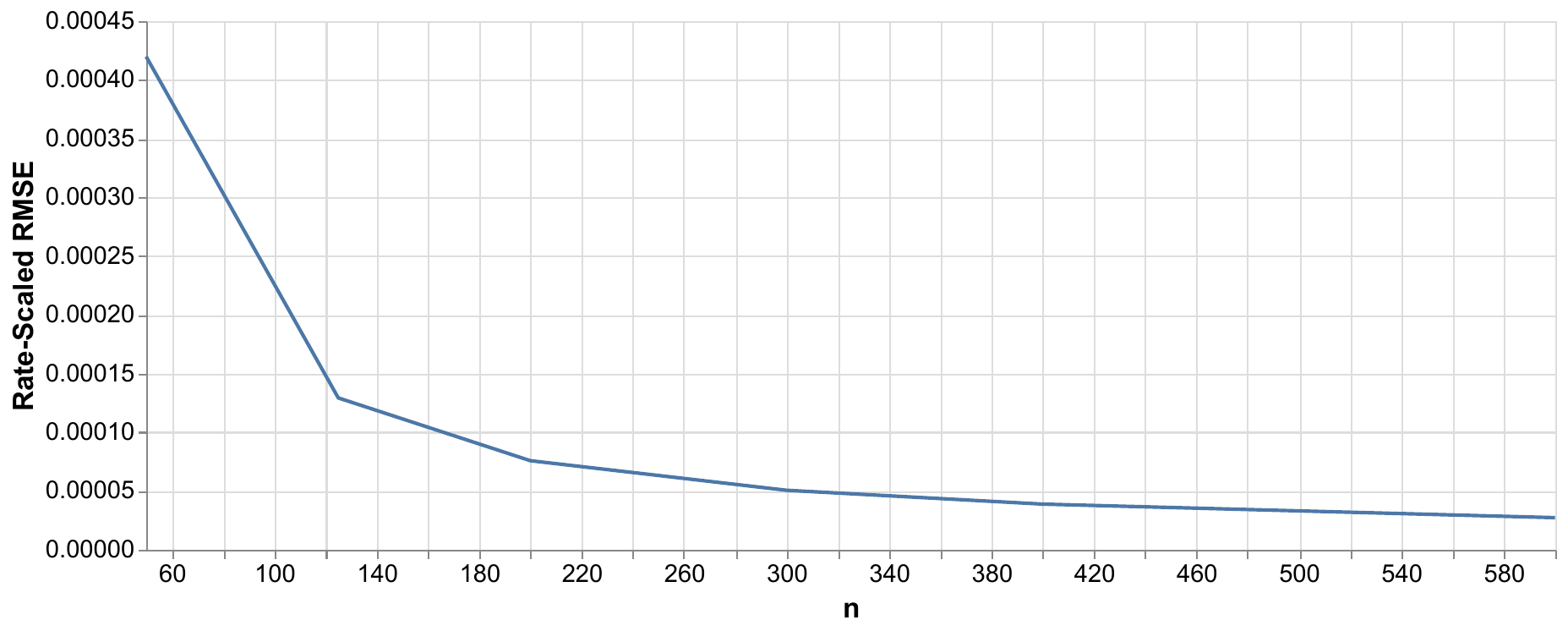}
    \caption{Convergence of HAR relative to theorized rate.}
    \label{fig:convergence}
\end{figure*}

\subsection{Empirical Performance}
\label{sec:empirical}

Lastly, we tested HAR against the baselines described above on several real regression datasets from the UCI Machine Learning Repository \citep{Dua2017-yp}. We took the first 2000 rows of each dataset and split them up randomly into 80\% train and 20\% test and computed test-set RMSE. We used just 2000 rows to speed up the evaluations because the kernel methods we use run slowly for large $n$ (as does HAL), which is a known drawback. We then repeated the process in its entirety 5 times and took the average of the test-set RMSEs. The results are shown in table \ref{tab:empirical}.

The results in \ref{tab:empirical} show that HAR and mixed Sobolev KRR perform similarly and both do very well in general. HAL also performs well but it was computationally impractical to run the algorithm for the larger datasets. The only substantial degradation in performance from HAR and mixed Sobolev KRR relative to baseline methods occurs for the very high-dimensional slice dataset. 

\begin{table}[]
\centering
\begin{tabular}{lrr|llllll}
\toprule
data & $n$ & $p$ & HAR & HAL & \makecell{Mixed\\Sobolev\\KRR} & \makecell{Radial\\Basis\\KRR} & \makecell{Random\\Forest} & \makecell{Ridge\\Regression} \\
\midrule
power & 2000 & 4 & \textbf{4.05} & --- & 4.11 & 4.28 & 4.11 & 4.56 \\
yacht & 308 & 6 & 8.74e-1 & 6.79e-1 & \textbf{4.18e-1} & 5.63e-1 & 1.01 & 8.66 \\
concrete & 1030 & 8 & \textbf{3.65} & 3.74 & 3.80 & 9.23 & 4.71 & 1.05e+1 \\
energy & 768 & 8 & \textbf{3.65e-1} & 4.39e-1 & 3.82e-1 & 4.72e-1 & 4.76e-1 & 2.85 \\
kin8nm & 2000 & 8 & 1.40e-1 & --- & 1.29e-1 & \textbf{9.22e-2} & 1.67e-1 & 2.04e-1 \\
protein & 2000 & 9 & 1.88 & --- & 1.91 & 5.85 & \textbf{1.86} & 2.64 \\
wine & 1599 & 11 & 6.07e-1 & --- & 6.11e-1 & 6.36e-1 & \textbf{5.79e-1} & 6.60e-1 \\
boston & 506 & 13 & 3.33 & 3.36 & \textbf{2.54} & 4.65 & 3.03 & 4.51 \\
naval & 2000 & 17 & 7.66e-4 & --- & \textbf{4.16e-4} & 1.89e-3 & 8.86e-4 & 1.32e-3 \\
yearmsd & 2000 & 90 & 1.15e+1 & --- & \textbf{9.07} & 1.15e+1 & 9.46 & 9.88 \\
slice & 2000 & 384 & 9.00 & --- & 7.96 & \textbf{1.31e-1} & 3.70e-1 & 6.35e-1 \\
\bottomrule
\end{tabular}
\caption{RMSE of different methods across the UCI datasets. Lowest RMSE for each dataset in bold.}
    \label{tab:empirical}
\end{table}

\section{Discussion}

HAR provides a conceptually simple and often performant and tractable algorithm with fast convergence in a meaningfully large nonparametric class of functions. The fast rate means, for example, that many efficient estimators of causal quantities can be shown to be asymptotically linear using HAR under weaker assumptions that would otherwise be required \cite{hal-og, Van_der_Laan2003-du, Tsiatis2007-py}.

For squared error loss HAR is more practical than HAL (which provides the same rate guarantee for a larger function class) because HAR does not require explicit computation of the $n2^p$-column design matrix. However, when kernelized, HAR does suffer from the well-known drawbacks of kernel methods in general. At training time kernel methods require the inversion of an $n \times n$ matrix which is roughly an $O(n^3)$ operation. This is not ideal but completely feasible with modern compute even for relatively large $n$. For truly massive internet-scale data this can be a problem but there are existing methods that mitigate these issues and which are likely rate-preserving with HAR (e.g. matrix sketching \cite{krr-sketch}, divide-and-conquer \cite{krr-rates-2}). Recent work \cite{Allerbo2023-kb} suggests that it may also be possible to modify HAR so that multiple solutions along the regularization path can be computed together with warm-start optimization as is done in elastic net algorithms \cite{glmnet}.

Nonetheless, the good empirical performance of HAR (and the mixed Sobolev kernel ridge) for small data increases the evidence that near-additive function classes are often a good model when learning from tabular data. The theoretical results established here can likely be combined with the early-stopping results from \cite{ltb} to construct efficient boosting algorithms with rate guarantees that are not slowed down by a lasso step.

\bibliographystyle{plain}
\bibliography{references}

\appendix

\section{Proof of Rate Result}

Here we provide a proof of the rate result in theorem 1. The proof here is decomposed into a main result and some corollaries that when combined give the result given in the main text. We first construct an oracle approximation that converges quickly to the target function but which is always in the HAR model with shrinking $\mathscr L_2$ norm. Standard empirical process arguments then give the rate for the empirical minimizer in that HAR model. We assume familiarity with empirical process arguments at the basic level of \cite{Van_der_Vaart2000-yx}.

\subsection{Loss Assumptions}
Let $f = \argmin_{\mathscr F} \P Lg$. Throughout our proofs we assume that the following conditions on the loss function $L$ hold at any $g\in \mathscr F$:

\begin{assumption}[Lipschitz loss]
    $ \|Lg-Lf\|^2 \lesssim c_1\|g - f\|$.
\label{ass:lipschitz}
\end{assumption}
\begin{assumption}[Divergence and $\mathscr L_2$-norm equivalence]
    $\|Lg-Lf\|^2 \lesssim \P (Lg - Lf) \lesssim \|g-f\|_\infty^2$.
\label{ass:l2}
\end{assumption}
Instead of assumption \ref{ass:l2} it is often more parsimonious and intuitive to assume the divergence $\P (Lg - Lf)$ is ``equivalent'' in a norm sense with the squared $\mathscr L_2$ norm, i.e. $\|g-f\|^2 \lesssim \P (Lg - Lf) \lesssim \|g-f\|^2$. This equivalence immediately implies the weaker upper bound above. The lower bound above is also implied by combining this with the Lipschitz assumption (\ref{ass:lipschitz}). These conditions typically hold under some further weak assumptions on the data distribution $\P$ as can be seen in the following examples:

\begin{example}[Squared Error]
If $Y$ is bounded, assumptions the above assumptions hold for squared error loss $L(g) = (g(X) - Y)^2$.
\end{example}
\begin{proof}
Equivalence of the divergence and squared error is immediate because $\P (Lg - Lf) = \|g-f\|^2$ using total expectation. To prove the Lipschitz condition note $(Lg - Lf) = (g-Y)^2 - (f-Y)^2 = [(g-Y) - (f-Y)][(g-Y) + (f-Y)] = (g-f)(g+f-2Y)$. Squaring and taking the expectation,
\begin{align*}
\|Lg - Lf\|^2 &= \int (g-f)^2(g+f-2Y)^2 \, d\P \\
&\le \left( \sup \left|g+f-2Y\right| \right)^2 \int (g-f)^2\, d\P 
\end{align*}
which gives the result since $f,g \in \mathscr F$ ensures these functions are bounded and our additional assumption bounds $Y$.
\end{proof}

Now we consider logistic regression where we take the function of interest $f$ to be the conditional log-odds of the outcome:

\begin{example}[Logistic Regression]
For binary $Y$, assumptions \ref{ass:l2} holds for log loss 
$L(g) = 
-Y\log
\left(
    \frac{1}{1 + e^{-g(X)}}
\right) 
- (1-Y)\log
\left(
    \frac{1}{1 + e^{g(X)}}
\right)
$ if the density $d \mathbb P_g/d\mathbb P = p_g(y|x) \equiv \left(\frac{1}{1 + e^{-g(x)}}\right)^y\left(\frac{1}{1 + e^{g(x)}}\right)^{1-y}$ is uniformly bounded away from 0 for all $g$ in $\mathscr F$.
\end{example}
\begin{proof}
Although a direct proof of the assumptions would be slightly quicker, we will instead prove the stronger $\mathscr L_2$ equivalence of the divergence and Lipschitz condition on the loss because this argument is more easily generalized.

With the above we can write $L(g) = -\log(p_g)$, simply the log-likelihood. In this case $\P (Lg-Lf)$ is exactly the KL divergence $\int \log(p_f / p_g) \, d\P_f = -\int \log\left(p_g/p_f\right) \, d\mathbb P_f$. Reproducing known arguments (see \cite{Van_der_Vaart2000-yx} pg. 62), an application of the pointwise inequality $-\log(z) \ge 2(1-\sqrt{z})$ for $z \ge 0$ gives
$$
-\int \log\left(\frac{p_g}{p_f}\right) \, d\mathbb P_f
\ge 2 \left( 1 - \int \sqrt{p_g p_f} \, d\mathbb P \right)
\ge \int \left( \sqrt{p_{g}} - \sqrt{p_{f}} \right)^2 \, d\mathbb P.
$$

Bounding $p_g, p_f$ away from 0 also implies the bound away from 1 so $|\sqrt{p_g} - \sqrt{p_f}| \ge c \|p_g - p_f\|$ for some $c$. Lastly, for $\psi^{-1}: g \mapsto p_g$ as defined above we have $\|p_g - p_f\| = \|\psi^{-1}(g) - \psi^{-1}(f)\| \ge c\|g - f\|$ for some other $c$ because the mapping $\psi$ is Lipschitz as long as the inputs are such that the outputs are bounded away from 0. Thus we have established the lower bound $c_1 \|g-f\|^2 \le \P(Lg-Lf)$.

The upper bound comes from a typical Taylor expansion with the mean value theorem applied to the second derivative (at each $x$, $\tilde p$ takes value between $p_g$ and $p_f$):
$$
\P (Lg-Lf) 
= -\P(\log(p_g) - \log(p_f)) 
= -\P\left[(p_g-p_f) p_f^{-1} - \frac{1}{2}(p_g-p_f)^2 \tilde p^{-2} \right].
$$
The first term is zero and the second is bounded above by $\left( \sup \left| \tilde p^{-2}\right| \right) \|p_g - p_f\|^2$ where the supremum is finite because our densities are bounded away from 0. The norm in terms of the densities is itself upper bounded by a constant times the norm in terms of $f,g$ because $\psi^{-1}$ is Lipschitz. This gives the upper bound $\P(Lg-Lf) \le c_2 \|g-f\|^2$ which completes the proof of equivalence with squared error loss.

Lastly, the Lipschitz condition on $L(g) = \log(\phi^{-1}(g))$ is satisfied because $\log(\cdot)$ is Lipschitz for inputs bounded away from 0 and $\psi^{-1}$ is also Lipschitz and produces outputs uniformly bounded away from zero under our assumptions.
\end{proof}

The proof given above for log loss is easily generalized to other likelihood-based losses, i.e. losses that have the form $L(g) = -\log (\phi^{-1}(g))$ with $\phi^{-1}: g \mapsto p_g$ some invertible mapping from a function-valued parameter to a density. The generalized conditions require upper and lower bounded densities and Lipschitz continuity of both $\psi$ and $\psi^{-1}$.

\subsection{Oracle Approximation}

Let $\X_{s,n}$ be the empirical CDF of $X_s$ and, assuming the densities $\frac{df_s}{d\mathbb X_s}$ exist, define the approximation
\begin{align}
    f_n(x) &= \sum_s \int_{(0,x]} \frac{df_s}{d\X_s} \, d\mathbb X_{s,n} \\
    &= \sum_s \left(
        \frac{1}{n} \sum_i 1(X_{i,s} \le x) \frac{df_s}{d\X_s}(X_{i,s}) 
    \right) \\
    &= H(x)^\top \gamma
\end{align}
where $\gamma_{i,s} = n^{-1}\frac{df_s}{d\X_s}(X_{i,s})$ are collapsed into a vector $\gamma$.

Now we consider how well $f_{n}$ approximates $f$ in loss-based divergence.

\begin{lemma}
Let $\mathbb X$, $f$, and $f_n$ be as above. If assumption \ref{ass:l2} holds then $\P (Lf_{n} - Lf) = O_{\P}(n^{-1})$.
\label{thm:approx-rate}
\end{lemma}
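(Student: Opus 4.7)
The plan is to recognize $f_n$ as an empirical-process plug-in for the sectional variation decomposition of $f$, bound the resulting sup-norm error at the parametric rate via standard empirical process theory, and then invoke the divergence upper bound in assumption \ref{ass:l2}. Since $f \in \mathscr F$ is right-continuous with square-integrable sectional derivatives, the Hardy-Krause decomposition gives
\[ f(x) = \sum_s \int_{(0,x_s]} \frac{df_s}{d\X_s}(u_s)\, d\X_s(u_s) = \sum_s \P \phi_{s,x}, \]
where $\phi_{s,x}(X) := 1(X_s \le x_s)\, \frac{df_s}{d\X_s}(X_s)$. By construction, $f_n(x) = \sum_s \P_n \phi_{s,x}$, so
\[ f_n(x) - f(x) = \sum_s (\P_n - \P)\phi_{s,x}, \]
that is, the approximation error is a finite sum of centered empirical process increments indexed by $x$.

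The next step is to show that for each section $s$, the class $\mathcal F_s = \{\phi_{s,x} : x \in [0,1]^p\}$ is $\P$-Donsker. The indicator factor $1(\cdot \le x_s)$ ranges over lower orthants in $[0,1]^{|s|}$, forming a VC-subgraph class of dimension at most $|s|$. Multiplying each element by the fixed measurable function $\frac{df_s}{d\X_s}$ rescales the envelope but preserves the uniform-entropy bound, and by assumption the envelope $|\frac{df_s}{d\X_s}|$ lies in $\mathscr L_2(\P)$. Standard permanence results (e.g., Theorem 19.14 in \cite{Van_der_Vaart2000-yx}) then imply $\mathcal F_s$ is Donsker, which gives $\sup_{x}|(\P_n - \P)\phi_{s,x}| = O_\P(n^{-1/2})$. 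Summing over the $2^p$ sections yields $\|f_n - f\|_\infty = O_\P(n^{-1/2})$, and applying the upper bound in assumption \ref{ass:l2} closes the argument:
\[ \P(Lf_n - Lf) \;\lesssim\; \|f_n - f\|_\infty^2 \;=\; O_\P(n^{-1}). \]

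The main obstacle is the Donsker step: ensuring the product class inherits the Donsker property despite the potential unboundedness of $\frac{df_s}{d\X_s}$. The square-integrability of the derivative is exactly what keeps the uniform-entropy integral of the envelope-normalized class finite. A more elementary alternative that avoids empirical process machinery is to compute $\mathbb{E}\|f_n - f\|^2$ directly via Fubini: conditioning on $x$ and using independence of the $X_i$ gives $\mathbb{E}\|f_n - f\|^2 \lesssim n^{-1}\sum_s \|df_s/d\X_s\|_{\mathscr L_2(\X_s)}^2$, after which Markov delivers $\|f_n-f\|^2 = O_\P(n^{-1})$. This suffices whenever one uses the $\mathscr L_2$ divergence upper bound $\P(Lg-Lf) \lesssim \|g-f\|^2$ noted in the text (and explicitly stated in the main theorem), even though assumption \ref{ass:l2} as written is phrased via the stronger $\|\cdot\|_\infty$.
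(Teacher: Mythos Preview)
Your proof is correct and follows essentially the same route as the paper: write $f_n-f$ as a sum over sections of centered empirical processes indexed by $x$, argue that each class $\{1(\cdot\le x_s)\,\tfrac{df_s}{d\X_s}(\cdot)\}$ is Donsker (the paper just says ``the density is a fixed function''; you flesh this out via the VC indicator class times a fixed $\mathscr L_2$ envelope), conclude $\|f_n-f\|_\infty=O_\P(n^{-1/2})$, and square via assumption~\ref{ass:l2}. Your added elementary $\mathscr L_2$ variance computation is a nice bonus but not needed here.
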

\begin{proof}
First we show $\|f_n-f\|_\infty = O_{\P}(n^{-1/2})$. We follow the same strategy employed in lemma 23 of \cite{vanderlaan2023higher}. Write $f(x) = \sum_s \int_{(0,x]} df_s$. The difference is 
\begin{align}
    (f_n-f)(x) 
    &= \sum_s \int_{(0,x]} \frac{df_s}{d\X_s} \, (d\mathbb X_s - d\mathbb X_{s,n}) \\
    &= \sum_s (\mathbb X_s - \mathbb X_{s,n})\left( 
        1(\cdot \le x)
        \frac{df_s}{d\X_s} (\cdot) 
    \right)
\end{align}
This is an empirical process indexed by $x$ and the functions $g_x(u) = 1(u \le x) \frac{df_s}{d\X_s} (u)$ fall in a Donsker class \cite{Van_der_Vaart2000-yx} (the density is a fixed function). Therefore the empirical process is uniformly bounded in probability at the rate $n^{-1/2}$ giving the desired supremum norm bound on $f_n-f$. The final result follows immediately from assumption \ref{ass:l2}.

\end{proof}

Now we show that this fast-converging approximation has a quickly shrinking L2 norm for the coefficients.

\begin{lemma}
Let $\X$, $f$ and $f_n$ be as above and assume that $df_s/d\X_s$ exists and is in $\mathscr L_2$ for each section. 
Then $\|\gamma\|^2 = O_p    (n^{-1})$.
\label{thm:approx-coef}
\end{lemma}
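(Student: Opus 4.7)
The plan is to exploit the fact that $\gamma$ has $n2^p$ entries, each carrying a factor of $n^{-1}$, so that $\|\gamma\|^2$ reduces to $n^{-2}$ times a sum of $2^p$ empirical averages (scaled by $n$) of squared derivative evaluations. The square integrability hypothesis on $df_s/d\X_s$ will control each of those averages in probability.

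First, I would expand
\begin{equation*}
\|\gamma\|^2 = \sum_{s \subseteq \{1,\dots,p\}} \sum_{i=1}^n \gamma_{i,s}^2 = \frac{1}{n^2} \sum_{s} \sum_{i=1}^n \left(\frac{df_s}{d\X_s}(X_{i,s})\right)^2 = \frac{1}{n} \sum_s \mathbb P_n \left(\frac{df_s}{d\X_s}\right)^2.
\end{equation*}
For each fixed section $s$, the assumption $df_s/d\X_s \in \mathscr L_2(\X_s)$ means $\X\!\left(\frac{df_s}{d\X_s}\right)^2 = \int \left(\frac{df_s}{d\X_s}\right)^2 d\X_s < \infty$. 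Hence by the weak law of large numbers (or equivalently by Markov's inequality applied to the nonnegative average), $\mathbb P_n\!\left(\frac{df_s}{d\X_s}\right)^2 = O_{\P}(1)$.

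Since there are only $2^p$ sections (a constant not depending on $n$), the total sum is also $O_{\P}(1)$, giving
\begin{equation*}
\|\gamma\|^2 = \frac{1}{n} \cdot O_{\P}(1) = O_{\P}(n^{-1}),
\end{equation*}
which is the stated conclusion.

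There is no serious obstacle here; the only point worth flagging is that the factor $n^{-1}$ in the definition of $\gamma_{i,s}$ is squared, so we get $n^{-2}$ in front of a sum of $n$ terms whose mean is finite by the $\mathscr L_2$ hypothesis. The finiteness of the number of sections is essential so that summing over $s$ does not introduce $n$-dependence; this is also why it is important to have assumed the density $df_s/d\X_s$ (and not merely $df_s/d\mu$) lies in $\mathscr L_2$, because the empirical average is naturally taken with respect to $\X_s$ rather than Lebesgue measure.
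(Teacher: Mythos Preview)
Your proof is correct and follows essentially the same approach as the paper: write $\sum_i \gamma_{i,s}^2 = n^{-1}\mathbb P_n\bigl(\tfrac{df_s}{d\X_s}\bigr)^2$, observe that this empirical average is $O_{\P}(1)$ by the $\mathscr L_2$ assumption, and sum over the finitely many sections. The only minor difference is that the paper appeals to the central limit theorem to control $(\mathbb P_n-\P)\bigl(\tfrac{df_s}{d\X_s}\bigr)^2$, whereas you use the weak law of large numbers (or Markov's inequality); your version is arguably cleaner since it only requires the assumed $\mathscr L_2$ integrability of the derivative, not an additional moment needed for the CLT.
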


\begin{proof}
$\sum_i \gamma_{i,s}^2 = n^{-1}\P_n \left(\frac{df_s}{d\X_s}\right)^2$. Of course $(\P_n-P)\left(\frac{df_s}{d\X_s}\right)^2 = O_{\P}(n^{-1/2})$ by the central limit theorem and the result follows.
\end{proof}

\subsection{Highly Adaptive Ridge}

Let $\mathscr F(M)$ be the set of right-continuous functions of sectional variation bounded by $M$ with sectional Radon-Nikodym derivatives w.r.t. the distribution of $X$ in $\mathscr L_2$. 
Define the highly adaptive ridge (HAR) model 
\begin{align}
	\mathscr F_n(Mn^{-1}) = \{
		H^\top \beta 
		: \|\beta\|^2 \le Mn^{-1}
	\}
\end{align}
and the empirical minimizer (HAR estimator) $\hat f_n = \argmin_{f \in \mathscr F_n(Mn^{-1})} \P_n L f$.
\\

\begin{theorem}
Let $\X$, $f$, $f_n$ and $\hat f_n$ be as above. 
If assumption \ref{ass:l2} holds and $df_s/d\X_s$ exists and is in $\mathscr L_2$ for each section, 
then there is an $M$ such that $\P (L\hat f_n - Lf) = O_{\P}(n^{-1/2})$.
\label{thm:har}
\end{theorem}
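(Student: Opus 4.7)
The plan is to decompose the excess risk around the oracle $f_n$ constructed in the previous subsection. Writing
$$\P(L\hat f_n - L f) = \P(L\hat f_n - L f_n) + \P(L f_n - L f),$$
Lemma \ref{thm:approx-rate} already handles the second (approximation) term at rate $O_{\P}(n^{-1})$. For the first (estimation) term, we first verify that $f_n$ is admissible in the model: Lemma \ref{thm:approx-coef} shows that $n\|\gamma\|^2 \to \sum_s \P (df_s/d\X_s)^2$, a finite constant by assumption, so any $M$ strictly exceeding this limit puts $f_n$ into $\mathscr F_n(Mn^{-1})$ with probability tending to one. This admissibility unlocks the defining ERM inequality $\P_n L \hat f_n \le \P_n L f_n$, and rearranging gives $\P(L\hat f_n - L f_n) \le (\P-\P_n)(L\hat f_n - L f_n)$.

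The crucial structural step is to show that the entire model $\mathscr F_n(Mn^{-1})$ is contained in a fixed Donsker class. Each basis $h_{i,s} = 1(c_{i,s} \le \cdot)$ has sectional variation equal to 1, so by the triangle inequality for $\|\cdot\|_v$ and Cauchy--Schwarz over the $d = n\, 2^p$ basis functions,
$$\|H^\top \beta\|_v \;\le\; \|\beta\|_1 \;\le\; \sqrt{d}\,\|\beta\|_2 \;\le\; \sqrt{n\cdot 2^p}\cdot \sqrt{Mn^{-1}} \;=\; \sqrt{M\cdot 2^p}.$$
Hence $\mathscr F_n(Mn^{-1}) \subseteq \tilde{\mathscr K}(\bar M)$ with $\bar M = \sqrt{M\cdot 2^p}$ independent of $n$, and the same computation yields a uniform $\mathscr L_\infty$ bound on members of the model. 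This is precisely the point where the $n^{-1}$ shrinkage of the constraint matters: a fixed $\ell_2$ bound would let the effective sectional variation blow up at rate $\sqrt{n}$ and leave the Donsker regime.

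With the model pinned inside $\tilde{\mathscr K}(\bar M)$, I would finish with a standard peeling argument. The bracketing entropy bound $\log N_{[\,]}(\epsilon, \tilde{\mathscr K}(\bar M), \|\cdot\|) \lesssim \epsilon^{-1} (\log(1/\epsilon))^{2(p-1)}$ from \cite{Fang2019-or}, combined with the Lipschitz Assumption \ref{ass:lipschitz}, transfers the entropy to the loss-difference class $\{Lg - Lf_n : g \in \mathscr F_n(Mn^{-1})\}$ up to a multiplicative constant. Applying a modulus-of-continuity / maximal inequality argument (e.g.\ the machinery of \cite{Van_der_Vaart2000-yx}) to the resulting entropy integral delivers the rate $\|\hat f_n - f_n\| = O_{\P}(n^{-1/3}(\log n)^{2(p-1)/3})$. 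Via Assumption \ref{ass:l2} this upgrades to $\P(L\hat f_n - Lf_n) = O_{\P}(n^{-2/3}(\log n)^{4(p-1)/3})$, which dominates the target $n^{-1/2}$ rate as well as the $O_{\P}(n^{-1})$ approximation error, so summing the two parts completes the argument.

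The hardest part is the peeling step, and specifically the compatibility between the lower bound $\|g-f\|^2 \lesssim \P(Lg-Lf)$ (needed to turn an empirical-process rate into an excess-risk rate) and the $\mathscr L_\infty$-based upper bound $\P(Lg-Lf) \lesssim \|g-f\|_\infty^2$ in Assumption \ref{ass:l2} (needed to bound the bracketing envelope). The bridge is Lipschitz continuity of $L$, which turns $\mathscr L_2$-brackets on $g$ into $\mathscr L_2$-brackets on $Lg$, but one has to propagate the $n$-independent constant $\bar M$ cleanly through the peeling shells so that the localization argument closes without picking up an $n$-dependent factor that would destroy the rate.
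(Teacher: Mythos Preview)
Your setup matches the paper exactly through the Donsker containment: the decomposition around $f_n$, the admissibility of $f_n$ via Lemma~\ref{thm:approx-coef}, and the Cauchy--Schwarz step $\|\beta\|_1 \le \sqrt{d}\,\|\beta\|_2 \le \sqrt{M2^p}$ showing $\mathscr F_n(Mn^{-1}) \subseteq \tilde{\mathscr K}(\sqrt{M2^p})$ are all precisely what the paper does.

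Where you diverge is in \emph{overshooting the target}. The theorem only claims $\P(L\hat f_n - Lf) = O_{\P}(n^{-1/2})$, and for that no peeling is needed at all. Once $L\hat f_n - Lf_n$ lies in a fixed Donsker class (which you have already established via the Lipschitz assumption and the sectional-variation bound), the empirical process is uniformly $O_{\P}(n^{-1/2})$, so
\[
\P(L\hat f_n - Lf_n) = (\P-\P_n)(L\hat f_n - Lf_n) + \underbrace{\P_n(L\hat f_n - Lf_n)}_{\le 0} = O_{\P}(n^{-1/2}),
\]
and adding the $O_{\P}(n^{-1})$ approximation term finishes the proof in two lines. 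That is the entire content of the paper's argument for this theorem.

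The peeling/localization you describe is not part of Theorem~\ref{thm:har} at all---it is the content of the \emph{subsequent} Corollary, which takes the $n^{-1/2}$ rate from Theorem~\ref{thm:har} as its \emph{starting point} and iterates toward the $n^{-2/3}$ fixed point. There is also a subtle misalignment in your version: you localize around $f_n$, but the curvature lower bound in Assumption~\ref{ass:l2} is stated around the true minimizer $f$, not around $f_n$, so the inequality $\|\hat f_n - f_n\|^2 \lesssim \P(L\hat f_n - Lf_n)$ that would drive your peeling is not available. The paper avoids this by doing the iterative localization around $f$ after Theorem~\ref{thm:har} is already in hand.
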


\begin{proof}
Lemma \ref{thm:approx-coef} directly implies that there exists an $M>0$ for which $f_n \in \mathscr F_n(Mn^{-1})$ with high probability for large-enough $n$. Use this $M$ to define the estimate $\hat f_n = \argmin_{\mathscr F_n(Mn^{-1})} \P_n Lf$.
The term $\P_n(L\hat f_n - Lf_n)$ is thus less than or equal to zero because both $\hat f_n, f_n \in \mathscr F_n(Mn^{-1})$ for every $n$ and $\hat f_n$ is defined as the empirical minimizer in each class.

For all functions in the model $\mathscr F_n(Mn^{-1})$ we have $\| \beta \|^2 \le Mn^{-1} \implies \| \beta \|_1 \le \sqrt{Md/n} = \sqrt{M2^p}$ by an application of Cauchy-Schwarz and recalling $d =n 2^p$. For these functions the sectional variation norm is given by $\|f\|_v = \|\beta\|_1$ \cite{hal, hal-og, Fang2019-or}.
Thus $\hat f_n, f_n$ are of bounded sectional variation (and of course right-continuous), guaranteeing that $L\hat f_n -Lf_n$ falls in the class $\{Lf:f\in \mathscr F(\sqrt{M2^p})\}$. Assumption \ref{ass:lipschitz} guarantees that this class is Donsker \cite{Van_der_Vaart2000-yx}.
Because of this, $(\P-\P_n)(L\hat f_n - Lf_n) = O_{\P}(n^{-1/2})$ \cite{Van_der_Vaart2000-yx}.
Thus
\begin{align}
	\P (L\hat f_n - Lf_n) 
	&= (\P-\mathbb P_n)(L\hat f_n - Lf_n) +    \mathbb P_n(L\hat f_n - Lf_n) \\
	&= O_{\P}(n^{-1/2}) 
\end{align}

Lastly, $\P (L \hat f_n - Lf) = \P (L \hat f_n - Lf_n) + \P ( Lf_n - Lf)$ where the latter term is $O_{\P}(n^{-1})$ by lemma \ref{thm:approx-rate} and thus negligible.
\end{proof}

\begin{corollary}
If assumption \ref{ass:l2} also holds then $\|f_n -f \| = O_{\P}(n^{-1/3}(\log n)^{2(p-1)/3})$.
\end{corollary}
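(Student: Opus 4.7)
The stated rate $n^{-1/3}(\log n)^{2(p-1)/3}$ on $\|\hat f_n - f\|$ is faster than the $n^{-1/4}$ rate that would follow trivially from Theorem \ref{thm:har} combined with the $\mathscr L_2$ lower bound $\|g-f\|^2 \lesssim \P(Lg - Lf)$. To close this gap the plan is to replace the crude Donsker bound used inside the proof of Theorem \ref{thm:har} (which only sees that the loss class is $\P$-Donsker) with a localized empirical process argument based on the bracketing entropy of the ambient class $\mathscr F(\sqrt{M 2^p})$, exploiting the fact that $\hat f_n$ and $f_n$ are both known to lie close to $f$ in $\mathscr L_2$.

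First I would revisit the decomposition
$$\P(L\hat f_n - Lf) = (\P - \P_n)(L\hat f_n - Lf_n) + \P_n(L\hat f_n - Lf_n) + \P(Lf_n - Lf),$$
noting that the middle term is nonpositive by definition of $\hat f_n$ and the last is $O_\P(n^{-1})$ by Lemma \ref{thm:approx-rate}. As established in the proof of Theorem \ref{thm:har}, both $\hat f_n$ and $f_n$ lie in $\mathscr F(\sqrt{M 2^p})$ with high probability, a class for which the bracketing entropy is known to satisfy $\log N_{[]}(\epsilon, \mathscr F(\sqrt{M 2^p}), \|\cdot\|) \lesssim \epsilon^{-1}(\log(1/\epsilon))^{2(p-1)}$ (a standard result for cadlag functions of bounded sectional variation on $[0,1]^p$). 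By the Lipschitz assumption \ref{ass:lipschitz} the induced loss class inherits this entropy bound up to constants, and the $\mathscr L_2$ norm of a loss difference $Lg - Lf_n$ is controlled by $\|g - f_n\|$, so localization in loss divergence is equivalent to localization in $\mathscr L_2$.

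The central technical step is then to invoke a localized M-estimator rate theorem (e.g.\ Theorem 3.4.1 of \cite{Van_der_Vaart2000-yx}), which yields $\|\hat f_n - f\| = O_\P(1/r_n)$ for any $r_n$ satisfying $r_n^2 \phi(1/r_n) \lesssim \sqrt{n}$, with modulus given by the bracketing integral
$$\phi(\delta) \lesssim \int_0^\delta \sqrt{u^{-1}(\log(1/u))^{2(p-1)}}\, du \lesssim \sqrt{\delta}\,(\log(1/\delta))^{p-1}.$$
Solving $r_n^2 \cdot r_n^{-1/2}(\log r_n)^{p-1} \asymp \sqrt{n}$ yields $r_n \asymp n^{1/3}(\log n)^{-2(p-1)/3}$, i.e.\ the claimed rate $\|\hat f_n - f\| = O_\P(n^{-1/3}(\log n)^{2(p-1)/3})$.

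The main obstacle will be carefully verifying the hypotheses of the localized rate theorem in this random-class setting: the empirical minimizer is being compared to the random oracle $f_n$ rather than to the population target $f$, and the HAR model $\mathscr F_n(M n^{-1})$ is itself data-adaptive. Both nuisances should be absorbable because $f_n$ is within $O_\P(n^{-1/2})$ of $f$ in sup-norm (Lemma \ref{thm:approx-rate}) and because $\mathscr F_n(M n^{-1}) \subseteq \mathscr F(\sqrt{M 2^p})$ uniformly in $n$, so these perturbations are negligible at the $n^{-1/3}$ scale and the rest reduces to bookkeeping on the entropy integral and the fixed-point equation.
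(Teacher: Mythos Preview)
Your proposal is correct and follows essentially the same strategy as the paper. The only cosmetic difference is that the paper unrolls the localization argument as an explicit iteration---starting from the $O_\P(n^{-1/2})$ bound of Theorem~\ref{thm:har}, shrinking the $\mathscr L_2$-ball over which the empirical process supremum is taken, reading off an improved rate from the entropy integral (citing \cite{Bibaut2019-zf, vanderlaan2023higher}), and repeating until the fixed point $n^{-2/3}(\log n)^{4(p-1)/3}$ is reached---whereas you invoke the packaged localized M-estimator theorem (Theorem 3.4.1 of van der Vaart--Wellner) and solve the fixed-point equation $r_n^2\phi(1/r_n)\asymp\sqrt n$ in one step. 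The entropy bound, the modulus $\phi(\delta)\lesssim\sqrt\delta(\log(1/\delta))^{p-1}$, and the resulting rate are identical, and the nuisance you flag (random class, population target $f$ not in $\mathscr F_n$) is handled in both arguments the same way: route through the oracle $f_n$, use $\mathscr F_n(Mn^{-1})\subseteq\mathscr F(\sqrt{M2^p})$, and absorb the $O_\P(n^{-1})$ approximation error from Lemma~\ref{thm:approx-rate}.
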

\begin{proof}
Here we give a sketch of the proof, which follows similar arguments in section 7.1 of \cite{vanderlaan2023higher}. Let $\mathbb G_n = \{\sqrt n (\P - \P_n)l: l \in \mathscr L\}$ be the empirical process indexed by functions in $\mathscr G = \{Lf:f\in \mathscr F(\sqrt{M2^p})\}$. We know 
\begin{equation}
    \P (L\hat f_n-Lf) \le (\P-\mathbb P_n)(L\hat f_n - Lf) \le n^{-1/2}\sup_{g \in \mathscr G} \mathbb G_n(g).
\end{equation}
By assumption \ref{ass:l2} and theorem $\ref{thm:har}$ we get $\|L\hat f_n-Lf\|^2 = O_{\P}(n^{-1/2})$.
Therefore the above still holds if we instead take the supremum of $\mathbb G_n$ over $\{g \in \mathscr G : \|g-Lf\|^2 \le n^{-1/2}\}$ instead of over all of $\mathscr L$ because we know $\hat f_n$ is in this set (asymptotically, with high probability). Using a bound on the entropy integral of $\{g \in \mathscr G : \|g-Lf\|^2 \le n^{-1/2}\}$ \cite{Bibaut2019-zf, vanderlaan2023higher} we obtain 
\begin{equation}
    \sup_{\{g \in \mathscr G : \|g-Lf\|^2 \le n^{-1/2}\}} \mathbb G_n g = O_{\P}(n^{-1/8} (\log n)^{p-1})
\end{equation}
and thus we have improved the rate to $\P (L\hat f_n-Lf) = O_{\P}(n^{-5/8} (\log n)^{p-1})$. Now we again use the smoothness of $L$ to bound $\|L\hat f_n-Lf\|^2 = O_{\P}(n^{-5/8} (\log n)^{p-1})$ and again we can iterate using a bound on the entropy integral of the smaller class $\{g \in \mathscr G : \|g-Lf\|^2 \le n^{-5/8} (\log n)^{p-1}\}$, giving an even faster rate. This process iterates and the rate approaches a fixed point which is $O_{\P}(n^{-2/3}(\log n)^{4(p-1)/3})$ A final application of the smoothness inequality and taking the square root gives the result.
\end{proof}

\begin{corollary}
Define a data-adaptive HAR model 
$
    \mathscr F_n(M) = \{
    H^\top \beta 
    : \|\beta\|^2 \le M_{n,k^*_n}
    \}
$
where $M_n$ is chosen data-adaptively from a grid of values $\mathcal M_n = [M_{n,1} < M_{n,2} < \dots M_{n,K}]$ by minimizing cross-validation loss. 
If there is a constant $\bar M$ and sequence $k_n$  such that $M \le nM_{n,k_n} \le \bar M$ for $M$ as defined in theorem \ref{thm:har} for all $n$ large enough then the cv-HAR estimator $\hat f_n = \argmin_{f \in \mathscr F_n(M_{n,k^*_n})} \P_n Lf$ attains the above convergence rate.
\end{corollary}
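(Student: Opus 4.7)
The plan is to deduce this corollary from theorem \ref{thm:har} via a standard cross-validation oracle inequality. First I would apply the proof of theorem \ref{thm:har} at the oracle grid index $k_n$. Let $\tilde f_n = \argmin_{f \in \mathscr F_n(M_{n,k_n})} \P_n Lf$ be the empirical minimizer using this fixed (non-CV) value. Because $nM_{n,k_n} \ge M$, lemma \ref{thm:approx-coef} ensures that the oracle approximation $f_n$ has $\|\gamma\|^2 \le M_{n,k_n}$ with probability tending to one, so $f_n \in \mathscr F_n(M_{n,k_n})$. The rest of the proof of theorem \ref{thm:har} then goes through verbatim: the induced sectional variation bound $\sqrt{nM_{n,k_n}\cdot 2^p} \le \sqrt{\bar M 2^p}$ is uniform in $n$, keeping every such candidate inside the same Donsker class, so the entropy argument controls $(\P - \P_n)(L\tilde f_n - Lf_n)$ at rate $n^{-1/2}$. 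This yields $\P(L\tilde f_n - Lf) = O_\P(n^{-1/2})$ and hence $\|\tilde f_n - f\| = O_\P(n^{-1/3}(\log n)^{2(p-1)/3})$ by the preceding corollary.

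Next I would invoke a cross-validation oracle inequality of van der Laan--Dudoit type. For losses satisfying assumptions \ref{ass:lipschitz} and \ref{ass:l2} and for candidate estimators that are uniformly bounded, $V$-fold CV yields, for any $\delta > 0$ and with probability tending to one,
\begin{equation*}
    \P(L \hat f_n - L f) \;\le\; (1+\delta)\,\min_{k} \P(L \hat f_{n,k} - L f) \;+\; C_\delta\, \frac{\log K_n}{n},
\end{equation*}
where $\hat f_{n,k}$ is the empirical minimizer over $\mathscr F_n(M_{n,k})$. Evaluating the minimum at $k = k_n$ and applying the previous step, the right-hand side is $O_\P(n^{-1/2})$ as long as $\log K_n$ is negligible relative to $n^{1/2}$, which is harmless for any grid whose size grows at most polynomially in $n$. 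Assumption \ref{ass:l2} then converts this loss-based rate back to the stated $\mathscr L_2$ rate, exactly as in the preceding corollary.

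The main obstacle I expect is justifying the oracle inequality in a setting where the candidate classes $\mathscr F_n(M_{n,k})$ depend on the training-fold data through $H$, so that the fold-wise problems are themselves random. Because $H$ is determined by $X$ alone, and by the same argument as in lemma \ref{thm:approx-rate} the fold-wise empirical CDFs approximate the true marginals uniformly at rate $n^{-1/2}$, the fold-wise estimators remain in a common (non-random) Donsker class of right-continuous functions with sectional variation bounded by $\sqrt{\bar M 2^p}$. This uniform envelope allows the standard CV oracle inequality to apply with constants independent of the fold and of $k$, closing the reduction.
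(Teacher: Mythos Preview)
Your proposal is correct and follows exactly the route the paper takes: the paper's proof is the single sentence ``This is a direct consequence of theorem \ref{thm:har} and the cross-validation oracle inequality \cite{cv-oracle},'' and you have simply unpacked that sentence in detail.

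One small point of order worth tightening: you feed the crude rate $\P(L\tilde f_n - Lf)=O_\P(n^{-1/2})$ into the oracle inequality and then say you will recover the $n^{-1/3}$ $\mathscr L_2$ rate ``exactly as in the preceding corollary.'' But the iteration in that corollary requires the estimator itself to lie in the fixed Donsker class $\mathscr F(\sqrt{\bar M 2^p})$, and the cv-selected index $k_n^*$ need not satisfy $nM_{n,k_n^*}\le \bar M$. The clean fix is to run the iteration on the oracle candidate $\tilde f_n$ first (which you can, since $nM_{n,k_n}\le \bar M$), obtaining the sharpened risk bound $\P(L\tilde f_n - Lf)=O_\P(n^{-2/3}(\log n)^{4(p-1)/3})$, and only then insert this into the oracle inequality. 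A single application of the lower bound in assumption \ref{ass:l2} then gives the $\mathscr L_2$ rate for $\hat f_n$ with no further iteration needed. This is a reordering of your steps, not a new idea.
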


\begin{proof}
This is a direct consequence of theorem \ref{thm:har} and the cross-validation oracle inequality \cite{cv-oracle}.
\end{proof}

\section{Higher-Order HAR}

In this section we present extensions of HAR that achieve even faster convergence rates under more stringent smoothness assumptions. First we set up some function classes, reproducing the exposition in \cite{vanderlaan2023higher}.

\subsection{Background}

Recall that right-continuous functions of bounded sectional variation can be represented as
$$
f(x) = \sum_{s_0\subseteq \{1\dots p\}} \int_{(0,x_{s_0}]} df_{s_0}(u_{s_0})
$$
using the convention that the term for $s_0=\emptyset$ above evaluates to $f(0)$ (and $\prod_{j \in \emptyset} u_j = 1$). The reason for the subscript on $s_0$ will become evident shortly.

Presume now that the Radon-Nikodym derivatives $f^{(s_0)} = df_{s_0} / d\mu_{s_0}$ exist and are themselves right-continuous functions of bounded sectional variation ($df_{s_0} / d\mu_{s_0} \in \tilde{\mathscr K})$. Then 
\begin{align*}
f(x) 
&= \sum_{s_0} \int_{u} 1(u_{s_0} \le x_{s_0}) \ f^{(s_0)}(u_{s_0}) \  d\mu_{s_0}(u_{s_0}) \\
&= \sum_{s_0} \int_{u} 1(u_{s_0} \le x_{s_0}) 
\left(
    \sum_{s_1 \subseteq s_0} 
    \int_{v} 1(v_{s_1} \le u_{s_1})\  
    df^{(s_0)}_{s_1}(v_{s_1}) 
\right)  d\mu_{s_0}(u_{s_0}) \\
&= \sum_{s_1 \subseteq s_0 \subseteq \{1\dots p\}} 
\left[
    \int_{u_{s_0/s_1}} 
        1(u_{s_0/s_1} \le x_{s_0/s_1}) \ 
    d\mu(u_{s_0/s_1}) \ 
    \int_{v} 
    \int_{u_{s_1}} 
        1(v_{s_1} \le u_{s_1} \le x_{s_1})\  
    d\mu_{s_1}(u_{s_1})\ 
    df^{(s_0)}_{s_1}(v_{s_1}) 
\right] \\
&= \sum_{s_1 \subseteq s_0 \subseteq \{1\dots p\}} 
\int_{v} 
    \underbrace{
    \prod_{j \in s_0/s_1} x_j \ 
    \prod_{j \in s_1} (x_j - v_j)1(v_j \le x_j)
    }_{h_{s_0,s_1}(v,x)}
    df^{(s_0)}_{s_1}(v_{s_1}) 
\\
&= \sum_{s_1 \subseteq s_0 \subseteq \{1\dots p\}} 
\int_{v} 
    h_{s_0,s_1}(v,x) \ 
    df^{(s_0)}_{s_1}(v_{s_1}) 
\end{align*}

Define $\|f\|_v^{(1)} = \sum_{s_1 \subseteq s_0 \subseteq \{1\dots p\}} \|f^{(s_0)}\|_v$ to be the ``1st order'' sectional variation norm. The ``0th order'' norm corresponds to the standard sectional variation. We define a class of functions $\mathscr F^{(1)}(M)$ to be those satisfying the above representation and which have $\|f\|_v^{(1)} \le M$. This class is smaller than our class $\mathscr F(M)$ and ``smoother'' in the sense that we have required the existence and variational boundedness of certain derivatives.

We can now repeat this construction, taking $s_2 \subseteq s_1$ and assuming $df_{s1}^{(s0)} = f^{(s_0,s_1)} d\mu_{s_1}$ with $f^{(s_0,s_1)}(x) = \sum_{s_2 \subseteq s_1} \int_{(0,x]} df^{(s_0,s_1)}_{s_2}$ assumed to be in $\mathscr F$. Plugging everything in and separating the integrals and evaluating them gives
\begin{align*}
f(x) 
&= \sum_{s_2 \subseteq s_1 \subseteq s_0 \subseteq \{1\dots p\}} 
\int_v \left[
\prod_{j \in s_0/s_1} x_j \ 
\prod_{j \in s_1/s_2} \frac{x_j^2}{2} 
\prod_{j \in s_2} 1(v_j \le x_j) \frac{(x_j - v_j)^2}{2}
\right]
    df_{s_2}^{(s_0,s_1)}(v_{s_2})
\\
\end{align*}
and at this point the pattern is clear and we can generalize and condense notation. 

Let $\bar s$ denote a sequence of $t+1$ sets such that $\{s_t \subseteq \dots s_1 \subseteq s_0 \subseteq \{1\dots p\}\}$. There are $(2+t)^p$ of these sequences $\bar s$. Now we can write
$$
h_{\bar s}(v,x) = 
\left(
\prod_{j\in s_t} \frac{(x_j-v_j)^t_+}{t!}
\right)
\left(
\prod_{\tau=1}^{t}
\left[
    \prod_{j \in s_{\tau-1}/s_\tau}
    \frac{x_j^\tau}{\tau !}
\right]
\right)
$$
where $(x_j-v_j)^t_+ = 1(v_j\le x_j)(x_j-v_j)^t$. And we define
$$
f^{(\bar s)}_{s_t} = 
\left( \frac{d}{d\mu_{s_{t-1}}}
\dots
\left( \frac{d}{d\mu_{s_1}}
\left( \frac{d}{d\mu_{s_0}}
    f_{s_0}
\right)_{s_1}
\right)_{s_2}
\dots
\right)_{s_t}
$$
to arrive at the general $t$-th order representation 
$$
f(x) = 
\sum_{\bar s} 
\int h_{\bar s}(v,x) df_{s_t}^{(\bar s)}(v_{s_t})
$$
under the condition that $f^{(\bar s)}$ are right-continuous functions of bounded sectional variation for all $\bar s$. To each such function $f$ we assign the $t$-th order sectional variation norm $\|f\|_v^{(t)} = \sum_{\bar s} \|f^{(\bar s)}\|_v$. Generalizing our notation from the introduction of the article we can call the class of functions that satisfy this representation and have finite sectional variation $\tilde{\mathscr K}^{(t)}_0$: the $t$ indicates the maximum order of the derivatives $f^{(\bar s)}$ and the subscript indicates smoothness restrictions on \textit{first-order derivatives} of each $f^{(\bar s)}$. $\tilde{\mathscr K}^{(t)}_0$ indicates that $f^{(\bar s)} \in \tilde{\mathscr K}$, which is what we have discussed (no condition on the derivatives). In what follows we will discuss classes $\tilde{\mathscr K}^{(t)}_q$ where the $q>0$ subscript indicates that the first-order sectional derivatives of each $f^{(\bar s)}$ are in $ \mathscr L_q(\P)$. Note that the derivatives of $f^{(\bar s)}$ are taken with respect to sections of $\mu$ and must exist, but it is integrability with respect to $\P$ that needs to be satisfied at the end.

\subsection{Estimator}

Let $H_t$ denote the set of $t$-th order spline basis functions of the form $h_{i,\bar s}(x) = h_{\bar s}(X_i, x)$
indexed by $i \in {1\dots n}$ and $s_t \subseteq \dots s_1 \subseteq s_0 \subseteq \{1\dots p\}$. There are $d=n(2+t)^p$ of these bases (again double counting intercepts, etc.). Our $t$-th order HAR estimator for a fixed $M$ is 
\begin{align*}
    \hat f_n &= \argmin_{f \in {\mathscr F}^{(t)}_{n}(M) } \mathbb P_n Lf \\ 
    {\mathscr F}^{(t)}_{n}(M) &= 
    \left\{
    \begin{array}{l}
        H_t(x)^\top \beta \\
        \text{s.t.}\  \|\beta\|^2 \le M
    \end{array}
    \right\}
\end{align*}
which is completely analogous to the ``0th order'' HAR presented previously ($H=H_0$) except with a different (larger) set of basis functions.

The basis functions $h_{i,\bar s}$ take an interesting form which may not be immediately apparent from the notation above. For example, in the first-order basis functions
$$
h_{i, \bar s}(x) = \prod_{j \in s_0/s_1} x_j \ 
\prod_{j \in s_1} (x_j - X_j)_+
$$
we can think of $X_i$ and $s_1$ together defining a knot point $X_{i, s_1}$ where as usual the non-$s_1$ entries of $X_i$ are set to zero (i.e. the point $X_i$ is projected onto the face defined by $s_1$). Varying over all $\bar s$, we get one or more splines anchored at each knot point. For example, when $p=2$, we have the bases given in table \ref{tab:bases} (suppressing the $i$ subscripts).
\begin{table}[h!]
    \centering
    \begin{tabular}{cccc}
        $s_1$ & $s_0$ & $X_{s_1}$ & $h_{i,\bar s}$ \\
        \hline \\
        $\{1,2\}$ & $\{1,2\}$ & $[X_1, X_2]$ & $(x_1 - X_1)_+ (x_2 - X_2)_+$ \\
        
        $\{1\}$ & $\{1,2\}$ & $[X_1, 0]$ & $(x_1 - X_1)_+ x_2$ \\
        $\{1\}$ & $\{1\}$ & $[X_1, 0]$ & $(x_1 - X_1)_+$ \\
        
        $\{2\}$ & $\{1,2\}$ & $[0, X_2]$ & $x_1 (x_2 - X_2)_+$ \\
        $\{2\}$ & $\{2\}$ & $[0, X_2]$ & $(x_2 - X_2)_+$ \\

        $\{\}$ & $\{1,2\}$ & $[0, 0]$ & $x_1 x_2$ \\
        $\{\}$ & $\{1\}$ & $[0, 0]$ & $x_1$ \\
        $\{\}$ & $\{2\}$ & $[0, 0]$ & $x_2$ \\
        $\{\}$ & $\{\}$ & $[0, 0]$ & $1$ \\
    \end{tabular}
    \caption{All 1st order basis functions generated by a point $X$ for $p=2$.}
    \label{tab:bases}
\end{table}

At \textit{all} knot points we place a first-order tensor-product spline $(x - X_{s_1})_+$ where the positive part is taken elementwise.
This may not be obvious but notice that, for example, $(x_1 - X_1)_+x_2 = (x_1 - X_1)_+(x_2 - 0)_+$ on $[0,1]^2$ and $[X_1, 0]$ is precisely $X_{\{1\}}$.
At knot points that lie on the faces of $[0,1]^p$ we additionally have products of first- and \textit{zero}-order splines (indicators). Again, notice that $(x_1 - X_1)_+ = (x_1 - X_1)_+1(x_2 \ge 0)$. Lastly, on the corner, we additionally place a product of \textit{purely zero-order} splines $1 = 1(x_1\ge0)1(x_2\ge0)$. This is instructive because the pattern is the same in higher dimensions. Knot points that fall on lower-dimensional sections have more bases placed at them which are products of higher- and lower-order 1-dimensional splines.

\subsection{Convergence Rate}

\begin{theorem}
Let our model be $\mathscr F^{(t)}_n(M_n) = \{H_t(x)^\top\beta :  \|\beta\|^2 \le M_{n}\}$ and our estimate be $\hat f_n = \argmin_{g \in \mathscr F^{(t)}_n(M_n)} \P_n L g$. 

If (1) the loss function obeys assumption \ref{ass:l2}, (2) $f \in \mathscr F^{(t)}(M^*)$, and (3) $M_n$ is chosen via cross-validation from a grid of values $\mathcal M_n = [M_{n,1} \dots M_{n,K}]$ such that $\exists k_n, \bar M: M \le n M_{n,k_n} \le \bar M$ for $M$ chosen suitably large enough, then $\|\hat f_n -f \| = O_{\P}(n^{-(t+1)/(2t+3)} (\log n)^{r(p,t)})$ for some fixed function $r$ depending only on the dimension $p$ and order $t$.
\end{theorem}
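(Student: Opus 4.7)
The plan is to mimic the argument for the first-order case (Theorem 1) step by step, replacing the $t=0$ basis expansion and oracle approximation with their $t$-th order analogs and using a sharper entropy bound for the smoother class $\mathscr F^{(t)}$.

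First I would construct an oracle approximation of the form
\[
f_n(x) = \sum_{\bar s} \frac{1}{n_{s_t}} \sum_{i} h_{\bar s}(X_{i,s_t}, x)\, f^{(\bar s)}_{s_t}(X_{i,s_t}) = H_t(x)^\top \gamma,
\]
where the inner averages replace the $d\X_{s_t}$-integrals in the representation of $f$ given in the background subsection by empirical integrals. By the same argument as in Lemma 1 (lemma 23 of van der Laan et al. 2023) each summand is a Donsker-indexed empirical process evaluated at a fixed function, so $\|f_n-f\|_\infty = O_{\P}(n^{-1/2})$ and Assumption 2 gives $\P(Lf_n - Lf) = O_{\P}(n^{-1})$. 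Second, exactly as in Lemma 2, the coefficient vector $\gamma$ has $\|\gamma\|^2 = n^{-1} \sum_{\bar s} \P_n (f^{(\bar s)}_{s_t})^2 = O_{\P}(n^{-1})$ since $f \in \mathscr F^{(t)}(M^*)$ bounds each $\P (f^{(\bar s)}_{s_t})^2$. Thus for $M$ large enough, $f_n \in \mathscr F^{(t)}_n(M/n)$ with high probability.

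Next, by Cauchy--Schwarz, any $g = H_t^\top \beta$ with $\|\beta\|^2 \le M/n$ satisfies $\|\beta\|_1 \le \sqrt{M(2+t)^p}$, which (by the analog of the identification $\|f\|_v = \|\beta\|_1$ for the $t$-th order bases) bounds $\|g\|_v^{(t)}$ by a constant. So $\hat f_n$ and $f_n$ both lie in a fixed class $\mathscr F^{(t)}(C)$ for large $n$. The key ingredient is the bracketing entropy bound for this class: from the literature on higher-order HAL (see the proofs cited in van der Laan et al. 2023 and the covering arguments of Bibaut and van der Laan 2019) one has
\[
\log N_{[]}(\epsilon,\mathscr F^{(t)}(C),\mathscr L_2(\P)) \lesssim \epsilon^{-1/(t+1)} (\log(1/\epsilon))^{q(p,t)}
\]
for some fixed exponent $q(p,t)$. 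Combined with Assumption 2 this makes $\{Lg : g \in \mathscr F^{(t)}(C)\}$ a Donsker class with the same entropy up to constants.

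Armed with this, the standard iterative empirical process argument used for the first-order case applies. Starting from the crude bound $\P(L\hat f_n - Lf_n) \le \mathbb G_n\text{-oscillation} = O_{\P}(n^{-1/2})$, one uses Assumption 2 to convert the loss divergence into an $\mathscr L_2$ bound on $\hat f_n - f$, restricts the empirical process supremum to a shrinking localized ball, re-applies the entropy bound, and iterates. Because the entropy exponent is $1/(t+1)$ rather than $1$, the resulting Peeling/fixed-point recursion has solution at the rate $\P(L\hat f_n - Lf) = O_{\P}(n^{-2(t+1)/(2t+3)}(\log n)^{\tilde r(p,t)})$. A final application of the $\mathscr L_2$-divergence equivalence and a square root yields $\|\hat f_n - f\| = O_{\P}(n^{-(t+1)/(2t+3)}(\log n)^{r(p,t)})$ with the log exponent $r(p,t)$ tracked through the iteration. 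The data-adaptive choice of $M_n$ is then handled exactly as in Corollary~3 via the cross-validation oracle inequality, using the hypothesized grid condition.

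The main obstacle is the entropy bound for $\mathscr F^{(t)}$: the combinatorics of the nested sections $s_t \subseteq \cdots \subseteq s_0$ and the coupling of the splines of different orders at knots on lower-dimensional faces must be accounted for carefully to obtain the correct $\epsilon^{-1/(t+1)}$ exponent and to bookkeep the log-factor exponent $q(p,t)$ (and hence $r(p,t)$). Everything else is a direct templating of the $t=0$ argument.
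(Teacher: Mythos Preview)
Your proposal is correct and follows essentially the same approach as the paper: template the 0th-order argument (oracle approximation, shrinking $\|\gamma\|^2$, Cauchy--Schwarz with $d=n(2+t)^p$, Donsker/entropy control, iterative localization, CV oracle inequality), swapping in the higher-order bases and the $\epsilon^{-1/(t+1)}$ entropy bound from \cite{vanderlaan2023higher}. The paper's own proof is in fact much terser than your sketch---it simply asserts that the arguments carry over verbatim once the smoothness condition is upgraded to $df^{(\bar s)}_{s_t}/d\X_{s_t}\in\mathscr L_2$ and the sharper entropy integral is invoked. One small slip: in your oracle approximation you should have the Radon--Nikodym derivative $\frac{df^{(\bar s)}_{s_t}}{d\X_{s_t}}(X_{i,s_t})$ as the coefficient, not $f^{(\bar s)}_{s_t}(X_{i,s_t})$ itself; correspondingly, the $\mathscr L_2$ bound on $\gamma$ comes from square-integrability of that derivative, not directly from the variation bound $M^*$.
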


\begin{proof}
The convergence rate proofs given above carry over exactly to higher-order HAR. The only difference is that the additional smoothness condition $df_s /d\X_s \in \mathscr L_2$ must be replaced by the higher-order smoothness condition $df^{(\bar s)}_{s_t} /d\X_{s_t} \in \mathscr L_2$. Since they are smaller, the function classes $\mathscr F^{(t)}(M)$ also have better entropy integrals and correspondingly better rates \cite{vanderlaan2023higher}, but the structure of the arguments is otherwise identical to what is given above for 0th order HAR. 
\end{proof}

\subsection{Computation}

Higher-order HAR can be ``kernelized'' in the same way as 0th order HAR. The inner product of the basis expansions of two points is
\begin{align*}
    H_t(x)^\top H_t(x') 
    &= \sum_i \sum_{\bar s} 
    \left(
        \prod_{j\in s_t} 
        \frac{(x_j-X_{i,j})^t_+(x'_j-X_{i,j})^t_+}{t!^2}
        \right)
        \left(
        \prod_{\tau=1}^{t}
        \left[
            \prod_{j \in s_{\tau-1}/s_\tau}
            \frac{(x_j x'_j)^\tau}{\tau !^2}
        \right]
    \right).
\end{align*}

The ``shells'' $\{1\dots p\}/s_0, s_0/s_1, s_1/s_2 \dots s_{t-1}/s_t, s_t = S_0, S_1 \dots S_{t+1}$ form a partition of $\{1 \dots p\}$. The set of all sequences $\bar s$ is one-to-one with the set of all such partitions $\bar S$ so we can instead think of the outer sum as looping over the set of all partitions. Moreover for a fixed $i$ let $u_j = (x_j, x'_j, X_{i,j})$ and let $w_{t+1}(u_j) = \frac{(x_j-X_{i,j})^t_+(x'_j-X_{i,j})^t_+}{t!^2}$, $w_0(u_j) = 1$, and $w_\tau(u_j) = \frac{(x_j x'_j)^{\tau}}{\tau !^2}$ for all $0<\tau<t+1$. Then
\begin{align*}
    H_t(x)^\top H_t(x') 
    &= \sum_i \sum_{\bar S} 
    \left(
    \prod_{j \in S_{t+1}} w_{t+1}(u_j) 
    \prod_{j \in S_{t}} w_{t}(u_j) 
    \prod_{j \in S_{t-1}} w_{t-1}(u_j) 
    \cdots
    \prod_{j \in S_{1}} w_{1}(u_j) 
    \prod_{j \in S_{0}} w_{0}(u_j) 
    \right).
\end{align*}

Notice that any given $j$ must fall into exactly one of the $t+2$ sets $S_\tau$. Thus the product inside the sum can be written as a product over $j$ that looks like $\prod_j w_{\tau(j)}(u_j)$ where $\tau(j)$ gives the index $\tau$ of the unique set $S_\tau$ that contains $j$. Therefore each term in the sum is constructed by choosing one element from $\{w_t(u_1):t \in \{0\dots t+1\}\}$, one element from $\{w_t(u_2):t \in \{0\dots t+1\}\}$, and so on and then finally multiplying them all together. This is precisely what we would get if we took any of the terms from the expansion of 
$$
\bigg( w_0(u_1) + \dots w_{t+1}(u_1) \bigg)
\bigg(w_0(u_2) + \dots w_{t+1}(u_2) \bigg) \cdots
\bigg(w_0(u_p) + \dots w_{t+1}(u_p) \bigg)
$$
and, indeed, since we consider a sum over all possible partitions, we actually obtain every term in this expression in the sum and therefore
\begin{align*}
    H_t(x)^\top H_t(x')
    &= \sum_i \prod_j \sum_{\tau=0}^{t+1} w_t(u_j) \\
    &= \sum_i \prod_j \left(
        \frac{(x_j-X_{i,j})^t_+(x'_j-X_{i,j})^t_+}{t!^2}
        + \sum_{\tau=1}^{t}  \frac{(x_j x'_j)^{\tau}}{\tau !^2} 
        + 1
    \right)
\end{align*}
which generalizes the formula given for 0th-order HAR.


\section{Comparison of Function Classes}
\label{sec:classes}

There is extensive literature on $\mathscr L_2$ convergence rates in Sobolev and Hölder classes \cite{tsybakov-book, gyorfi-book}. In this section we briefly review some of these concepts and compare and contrast our work. In what follows let $\alpha \in \mathbb N_0^p$ be a multi-index of integers and define the general mixed derivative as 
$$
D^\alpha f = \frac
{\partial^{\|\alpha\|_1} f}
{\prod_j \partial^{\alpha_j}x_j}.
$$

\paragraph{Hölder Classes.} 
A \textit{Hölder class} $\mathscr H^{(\beta)}$ on an open domain $\Omega\subseteq \mathds R^p$ with smoothness $\beta$ is the set of functions $f:\Omega \to \mathds R$ for which there is a global constant $C$ satisfying
$$
|(D^\alpha f)(x) - (D^\alpha f)(x')| \le C\|x-x'\|^{(\beta - \floor{\beta})} \ \ 
\forall \alpha : \|\alpha\|_1 \le \floor{\beta},
$$
implying the existence of the required derivatives. For integer $\beta$, this simply means that the function is $\beta-1$ times differentiable (in any directions) and that the resulting derivative is Lipschitz continuous but not necessarily itself differentiable. When $\beta$ is not an integer, the continuity condition on the resulting derivative is weaker than the typical Lipschitz condition (but still stronger than uniform continuity) because, e.g. $|g(x)-g(x')| \le (x-x')^{1/2}$ allows $g$ to be increasing arbitrarily quickly the closer we take $x' \to x$. Hölder classes are usually understood to formalize the notion of functions having a certain amount of ``local'' or pointwise smoothness.

\paragraph{Sobolev Classes.}
A \textit{Sobolev class} $\mathscr W_q^{(\gamma)}$ on an open domain $\Omega\subseteq \mathds R^p$ with smoothness $\gamma$ is the set of functions $f:\Omega \to \mathds R$ for which
$$
D^\alpha f \in \mathscr L_q(\mu) \ \ 
\forall \alpha: \|\alpha\|_1 \le \gamma
$$
where $g \in \mathscr L_q(\mu) \iff\left(\int_\Omega g^q d\mu\right)^{1/q} < \infty$ and $g$ is $\mu$-measurable together define the typical $\mathscr L_q(\mu)$ space. Sobolev classes are usually understood to formalize a notion of functions having a certain amount of ``global'' smoothness. 

For Sobolev spaces, the derivative $D^\alpha f$ should be read as a ``weak'' derivative. A function we call $D^\alpha f$ is considered to be a derivative of $f$ as long as it satisfies the multidimensional integration by parts formula $\int_\Omega \phi (D^\alpha f) \, d\mu = (-1)^{\|\alpha\|_1}\int_\Omega (D^\alpha \phi) f \, d\mu$ for any compactly supported, infinitely differentiable function $\phi$.

It's not too difficult to see that $\mathscr H^{(\beta)} \subset \mathscr W^{(\floor{\beta})}_1$ when the functions are defined on a compact domain: local boundedness of the derivatives implies boundedness which implies integrability. So in this sense some local smoothness implies a certain amount of global smoothness. Interestingly, the inclusion can also go the other way: the Sobolev embedding theorems show, for example, that $\mathscr W_1^{(p+1)} \subset \mathscr H^{(1)}$. So with a lot of global smoothness (depending on dimension $p$) one can also guarantee some local smoothness.

\paragraph{Mixed Sobolev Classes.}
Following \cite{simon, Shi2024-uh, Hansen2012-pg}, we define the \textit{Sobolev class with dominating mixed derivatives (mixed Sobolev class)} $\mathscr S_q^{(\gamma)}$ on an open domain $\Omega\subseteq \mathds R^p$ with smoothness $\gamma$ to be the set of functions $f:\Omega \to \mathds R$ for which
$$
D^\alpha f \in \mathscr L_q(\mu) \ \ 
\forall \alpha: \|\alpha\|_\infty \le \gamma.
$$
Where again the derivatives need only exist in the weak sense described above. The only difference between this definition and the one above is that we have replaced $\|\alpha\|_1$ by $\|\alpha\|_\infty$, but this changes things substantially. For example, consider $p=2$ so $\alpha = [\alpha_1, \alpha_2]$ and let $\gamma=1$. 
The Sobolev class $\mathscr W^{(1)}_1$ constraint on $\alpha$ is just concerned with $\alpha: \alpha_1 + \alpha_2 \le 1$, so we only require integrability of $f$ itself and its first derivatives in the $x_1$ or $x_2$ direction. There is no constraint on the mixed derivative. However for $\mathscr S_1^{(1)}$ the case $\alpha = [1,1]$ satisfies the condition $\|\alpha\|_\infty = \max_j \alpha_j \le 1$ and therefore the mixed derivative must also be integrable in order for $f$ to be in $\mathscr S^{(1)}_1$. Indeed, we have the following tight inclusions: $\mathscr W_q^{(p)} \subset \mathscr S_q^{(1)} \subset \mathscr W_q^{(1)}$. So a mixed Sobolev class can be thought of as more ``smooth'' than the equivalent Sobolev class, but not as smooth as the Sobolev class with $\gamma=p$.

A mixed Sobolev class of functions of dimension $p$ can also be thought of as a tensor product of equivalent Sobolev classes of one-dimensional functions. That is, for some set $\Omega \subseteq \mathds R$ if we let $\mathscr S_{q,p}^{(\gamma)}$ be the $q$-smooth, $p$-dimensional mixed Sobolev class of order $\gamma$, we have $\mathscr S_{q}^{(\gamma)}(\Omega^p) = \bigotimes_1^p \mathscr W_{q}^{(\gamma)}(\Omega)$ \cite{simon, Sickel2009-fe}. Concretely, $f \in \mathscr S_{q,p}^{(\gamma)}$ can be written as a limit of functions of the form $\sum^R_{r} \prod_j h_{r,j}(x_j)$ where $R$ is a finite positive integer and $h_{r,j} \in \mathscr W_{q}^{(\gamma)}(\Omega)$. For these functions, any mixed derivative is a sum of products of univariate derivatives so it is easy to see why integrability of the latter implies integrability of the former, i.e. why $\mathscr S_{q}^{(\gamma)}(\Omega^p) \subseteq \bigotimes_1^p \mathscr W_{q}^{(\gamma)}(\Omega)$ at a minimum.

Appendix B of \cite{simon} provides an excellent summary of these results.

\paragraph{Our Function Class $\mathscr F$.}
In our paper we mainly consider the function class $\mathscr F = \tilde {\mathscr K}^{(1)}_2(\P)$ which are right-continuous functions where sectional Radon-Nikodym derivatives with respect to marginal distributions of $X$ are square-integrable. Here we show how this class is related to the above classes, in particular the mixed Sobolev class $\mathscr S_2^{(1)}$. 

There are two ways in which our function class $\tilde {\mathscr K}^{(1)}_2$ differs from the mixed Sobolev class $\mathscr S_2^{(1)}$. The first is the \textit{sense} in which the derivative is defined. For our class we are interested in Radon-Nikodym derivatives with respect to marginals of $\P$, whereas in the mixed Sobolev class we are interested in weak derivatives in the distributional sense. The second difference is in \textit{where} we care about the behavior of the derivative: for our class  we care about the behavior of these derivatives on the sections (edges, faces) of the domain, whereas for the mixed Sobolev class we only care about the interior of the domain (the boundaries having Lebesgue measure zero).

For the rest of this discussion, suppose that $X$ is uniformly distributed on $[0,1]^p$ so that $df_s/d\X_s = \partial^{|s|}f_s / \prod_{j\in s}\partial x_j \equiv D^{\alpha(s)}f_s$.\footnote{If $X$ is absolutely continuous and has positive density everywhere then the proceeding discussion also applies with Sobolev classes replaced with certain ``weighted'' equivalents. We are unaware of Sobolev-like classes that are defined in terms of any kind of weak derivatives where integration is with respect to general measures, but that could also generalize the following discussion.} That effectively eliminates the first difference between the two classes because $D^{\alpha(s)}f_s$ is a weak derivative of $f_s$ on its domain $\mathcal X_s = \prod_{j\in s} =(0,1]_j\prod_{j \not\in s} \{0\}$. Moreover if any weak derivative exists it is necessarily almost everywhere the Radon-Nikodym derivative $df_s/d\mu_s$. Our class thus represents the right-continuous functions where

$$
D^\alpha f_{s(\alpha)} \in \mathscr L_2(\mu_s) \ \ 
\forall \alpha: \|\alpha\|_\infty \le 1.
$$

This looks quite similar to the 1st-order mixed Sobolev class $\mathscr S_2^{(1)}$. However, $\tilde{\mathscr K}_2^{(1)}$ is only concerned with \textit{sectional} weak derivatives: each $D^\alpha f_{s(\alpha)}$ is defined on the domain $\mathcal X_s$ and not on all of $[0,1]^p$ as a typical weak derivative $D^\alpha f$ would be. However, we will show that such a weak derivative can indeed be defined everywhere. 

\begin{theorem}
Let the first-order mixed Sobolev class $\mathscr S_2^{(1)}$ be defined as above and let $\tilde {\mathscr K}^{(1)}_2(P)$ be the set of right-continuous functions of $X \in [0,1]^p$ where sectional Radon-Nikodym derivatives with respect to marginal distributions of $X$ are square-integrable. 
If the distribution of $X$ is uniform, then $\tilde{\mathscr K}_2^{(1)} \subseteq \mathscr S_2^{(1)}$.
\end{theorem}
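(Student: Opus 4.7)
The plan is to verify the mixed Sobolev membership directly by exploiting the integral representation of right-continuous functions of bounded sectional variation. Under the uniformity assumption $d\X_s = d\mu_s$, the hypothesis $f \in \tilde{\mathscr K}_2^{(1)}$ gives that each Radon-Nikodym derivative $h_s := df_s/d\mu_s$ lies in $\mathscr L_2(\mu_s)$ and
\begin{equation*}
f(x) = \sum_{s \subseteq \{1,\dots,p\}} g_s(x), \qquad g_s(x) := \int_{(0, x_s]} h_s(u_s) \, d\mu_s(u_s),
\end{equation*}
where $g_s$, viewed as a function on $[0,1]^p$, depends only on the coordinates $x_s$. Since weak differentiation is linear and the sum is finite, it suffices to show that each $g_s$ lies in $\mathscr S_2^{(1)}([0,1]^p)$.

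For a multi-index $\alpha \in \{0,1\}^p$ with $s_\alpha := \{j : \alpha_j = 1\}$, if $s_\alpha \not\subseteq s$ then $g_s$ is constant in some direction $x_j$ with $\alpha_j = 1$, so $D^\alpha g_s = 0$. Otherwise, I would propose the candidate weak derivative
\begin{equation*}
G^{s,\alpha}(x) := \int_{(0, x_{s \setminus s_\alpha}]} h_s(u_{s \setminus s_\alpha}, x_{s_\alpha}) \, d\mu_{s \setminus s_\alpha}(u_{s \setminus s_\alpha}),
\end{equation*}
and verify the distributional identity $\int \phi\, G^{s,\alpha}\, d\mu = (-1)^{\|\alpha\|_1} \int (D^\alpha \phi)\, g_s\, d\mu$ for every test function $\phi \in C^\infty_c((0,1)^p)$. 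By Fubini this identity factorizes into one-dimensional integration-by-parts statements along each coordinate of $s_\alpha$, which are standard since $g_s$ is absolutely continuous in each coordinate of $s$ (being the Lebesgue integral of an $\mathscr L_2 \subseteq \mathscr L_1$ function).

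To establish square-integrability, I would apply Cauchy-Schwarz pointwise:
\begin{equation*}
\bigl(G^{s,\alpha}(x)\bigr)^2 \le \mu_{s\setminus s_\alpha}\bigl((0, x_{s\setminus s_\alpha}]\bigr) \int_{(0, x_{s\setminus s_\alpha}]} h_s^2(u, x_{s_\alpha}) \, d\mu_{s\setminus s_\alpha}(u) \le \int_{[0,1]^{|s \setminus s_\alpha|}} h_s^2(u, x_{s_\alpha}) \, d\mu_{s \setminus s_\alpha}(u),
\end{equation*}
using that the Lebesgue measure of the box is bounded by $1$. Integrating over $x$ and invoking Fubini to absorb the $x_{-s}$ coordinates (which contribute a factor of $1$) yields $\|G^{s,\alpha}\|_{\mathscr L_2(\mu)}^2 \le \|h_s\|_{\mathscr L_2(\mu_s)}^2 < \infty$. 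Summing the finitely many non-vanishing terms over $s \supseteq s_\alpha$ gives $D^\alpha f \in \mathscr L_2(\mu)$ for every $\alpha \in \{0,1\}^p$, which is exactly the defining condition for $\mathscr S_2^{(1)}$.

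The main obstacle I anticipate is the bookkeeping required to verify that $G^{s,\alpha}$ really is the distributional derivative rather than merely a classical pointwise derivative defined almost everywhere: one must justify Fubini and the iterated one-dimensional integration-by-parts carefully across the coordinates of $s_\alpha$ and then of $s \setminus s_\alpha$. Once the tensor-product structure of $g_s$ is made explicit, however, this reduces to standard one-dimensional Sobolev arguments applied coordinate by coordinate, and the Cauchy-Schwarz bound controls every resulting term uniformly.
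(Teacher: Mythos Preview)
Your argument is correct and takes a genuinely different route from the paper. The paper works with $f$ as a whole: it starts from the sectional derivatives $D^{\alpha(s)}f_s$ (which are only defined on the faces $\mathcal X_s$) and extends each one to the full cube by the formula
\[
g_s(x_s,x_{-s}) = D^{\alpha(s)}f_s(x_s) + \int_0^{x_{-s}} D^{\bm 1}f(x_s,u_{-s})\,du_{-s},
\]
then argues via an integration-by-parts manipulation with test functions that this $g_s$ is the weak derivative $D^{\alpha(s)}f$ on $(0,1)^p$, and finally bounds $\int g_s^2$ by expanding the square into three terms handled by assumption, Jensen, and Cauchy--Schwarz respectively. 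You instead exploit the additive representation $f=\sum_s g_s$ with $g_s(x)=\int_{(0,x_s]}h_s\,d\mu_s$ and differentiate each summand separately; because each $g_s$ is literally the Lebesgue integral of an $\mathscr L_2$ density, its mixed weak derivatives are immediate from iterated one-dimensional absolute continuity, and the $\mathscr L_2$ bound is a single Cauchy--Schwarz step.

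Your decomposition is arguably cleaner: it sidesteps the paper's test-function surjectivity claim (that every $\tilde\phi\in C_c^\infty$ arises as $D^{\alpha(-s)}\phi$ for some test function $\phi$), and it yields the weak derivative and its $\mathscr L_2$ bound simultaneously rather than in two separate passes. What the paper's approach offers in exchange is an explicit closed form for $D^{\alpha(s)}f$ in terms of the boundary datum $D^{\alpha(s)}f_s$ and the top-order density $D^{\bm 1}f$, which makes the structure of the extension visible; your formula $D^{\alpha}f=\sum_{t\supseteq s_\alpha}G^{t,\alpha}$ is equivalent but distributed across the sectional pieces.
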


\begin{proof}
Pick any function $f \in \tilde{\mathscr K}_2^{(1)}$. If the distribution of $X$ is uniform, then $df_s/d\X_s = \partial^{|s|}f_s / \prod_{j\in s}\partial x_j \equiv D^{\alpha(s)}f_s$, defined only on the section $\mathcal X_s = \prod{j\in s}(0,1]_j\prod_{j \notin s} \{0\}_j$. Our task is to extend these weak derivatives to the entire domain and show they are all in $\mathscr L_2$.

Note that the ``fully-saturated'' mixed derivative $D^{\bm 1}f_{\{1\dots p\}}$ is already the weak derivative of $f$ over the whole domain since for weak derivatives we are not concerned with the faces because they have measure 0 for the uniform distribution. Now we extend the other weak derivatives to cover $(0,1]^p$. For any section $s$ (including $s=\{0\}$, for which $D^{\alpha(s)}f_s(x_s)= f(0)$), let 
$$
g_s(x_s, x_{-s}) = D^{\alpha(s)}f_s(x_s) + \int_{0_{-s}}^{x_{-s}} D^{\bm 1}f(x_s, u_{-s})\, du_{-s}.
$$ 
By construction, $x_{-s} \mapsto D^{\bm 1}f(x_s, x_{-s})$ is a weak derivative of $g_ss$ in the $x_{-s}$ direction and therefore, leveraging the general integration by parts formula for weak derivatives, 
\begin{align*}
\int g_s (\underbrace{D^{\alpha(-s)}\phi}_{\tilde\phi}) \, dx
&=
(-1)^{\|\alpha(-s)\|_1}\int (D^{\alpha(-s)} g) \phi \, dx \\
&=
(-1)^{\|\alpha(-s)\|_1}\int (D^{\bm 1}f) \phi\, dx \\
&=
(-1)^{\|\alpha(-s)\|_1}(-1)^{p} \int f (D^{\bm 1}\phi)\, dx \\
&=
(-1)^{\|\alpha(s)\|_1} \int f (D^{\alpha(s)}\underbrace{D^{\alpha(-s)}\phi}_{\tilde \phi})\, dx
\end{align*}
for any test function $\phi$. In the last equation above we used the fact that the parity of ${p + \|\alpha(-s)\|_1}$ is equal to that of ${\|\alpha(s)\|_1}$ (sum of two odd numbers is even, etc.). This shows that the constructed function $g_s$ defined over (almost) the entire domain is a weak derivative $D^{\alpha(s)}f$ because every test function $\tilde\phi$ can be represented as a derivative $D^{\alpha(-s)}\phi$ with $\phi$ another test function (this is a property of the smooth functions).

Finally, we show all of these weak derivatives are square-integrable over the domain. Using the above definition of $g_s=D^{\alpha(s)}f$ and expanding the square under the integral, the squared $\mathscr L_2$ norm $\int (D^{\alpha(s)}f)^2 dx$ is
\begin{align*}
\int \bigg(
\left[
D^{\alpha(s)}f_s(x_s)\right]^2 + 
2 D^{\alpha(s)}f_s(x_s)\int_{0_{-s}}^{x_{-s}} D^{\bm 1}f(x_s, u_{-s})\, du_{-s} +
\left[\int_{0_{-s}}^{x_{-s}} D^{\bm 1}f(x_s, u_{-s})\, du_{-s}\right]^2
\bigg)
\, dx
\end{align*}

The first term is bounded by assumption. The last term is 
\begin{align*}
\int
\left[\int_{0_{-s}}^{x_{-s}} D^{\bm 1}f(x_s, u_{-s})\, du_{-s}\right]^2
\, dx
    &\le 
\int_{x_s} \int_{x_{-s}} \left[
\int_{0_{-s}}^{1_{-s}} \left[D^{\bm 1}f(x_s, u_{-s})\right]^2\, du_{-s}
\right]
\, dx_{-s} \, dx_{s} 
\end{align*}
by first applying Jensen's inequality ($D$ and then extending the innermost upper limit of integration (the integrand is non-negative). The result is then bounded by assumption after changing the order of integration. Given the bounded first and third terms, the second term is bounded by Cauchy-Schwarz.

We have thus shown that $f$ has all of the required weak derivatives and that they are square-integrable. Thus $f \in \mathscr S_2^{(1)}$ and since $f$ was arbitrary this shows $\tilde{\mathscr K}_2^{(1)} \subseteq \mathscr S_2^{(1)}$.
\end{proof}

These arguments show that the mixed Sobolev class includes our class when $X$ is uniform. However there is a sense in which the inclusion goes the other way as well. It is known that every function in $\mathscr S^{(1)}_2$ has a $\mathscr L_2$ representative that is absolutely continuous: simply integrate the given weak derivatives and the result is guaranteed to be almost everywhere equal to the original function. The absolute continuity of course implies right-continuity. Moreover it is known that sectional variation is bounded by the sum of the $\mathscr L_2$ norms over all the first order mixed derivatives, which we know are bounded if we assume our function is in $\mathscr S^{(1)}_2$ \cite{Owen2005-er}. Therefore every function in $f \in \mathscr S^{(1)}_2$ is almost-everywhere equal to a function in $\tilde{\mathscr K}$. Clearly these functions have weak mixed derivatives everywhere, but it remains to be shown that they are square-integrable along the sections. However here again we are saved by picking a good $\mathscr L_2$ representative because the all the sections except $s = \{1\dots p\}$ have measure zero. Since weak derivatives are only unique up to a set of measure zero, we can change the values on the faces of $[0,1]^p$ arbitrarily so that the required sectional weak derivatives are square-integrable. Therefore, while $\tilde{\mathscr K}_2^{(1)} \not\supset \mathscr S_2^{(1)}$, from the perspective of the $\mathscr L_2(\mu)$ norm the spaces $\tilde{\mathscr K}_2^{(1)}$ and $\mathscr S_2^{(1)}$ are indeed the same when $X$ is uniform. This demonstrates the very close connection between our work and that of \cite{simon, tps-anova}.

The $\tilde {\mathscr K}^{(t)}_q$ spaces may be more natural to work with because by being defined in terms of the distribution of $X$ we obtain $\mathscr L_2(\P)$ convergence results without assuming absolute continuity, etc. (technically for $q \ge 1$ we should write $\tilde {\mathscr K}^{(t)}_{q,\P}$ to denote the dependence on $\P$). Moreover the proofs are quite simple. We also have the clear generalization to $\tilde {\mathscr K}^{(t)}_0$ where there is no condition on the integrability of any derivatives.

On the other hand, we must pay for this by assuming right-continuity, except in cases where we think $X$ does not have point masses at these discontinuities. There is also a huge amount of additional theory for Sobolev classes and generalizations (Besov and Triebel–Lizorkin) which might be leveraged to better understand different combinations of relatively weak assumptions that give dimension-free rates for regression. Lastly, our algorithm does not actually return the empirical minimizer in our function class (subject to some norm constraint): the HAR ``kernel'' is data-adaptive and the returned function is an empirical minimizer in an approximation of the function class.

Certainly more work is required to investigate the relationship between $\tilde {\mathscr K}^{(t)}_q$ and $ {\mathscr S}^{(t)}_q$ spaces.

\section{Regularization Search for Kernel Ridge Regression}
\label{sec:krr-reg}

Here we describe in more detail how we search and set the regularization hyperparameter $\lambda$ for all of the kernel methods in our experiments. We have not encountered this strategy in the wild so it may be of some independent interest. This method is not specific to HAR, it works for any kernel.

Our strategy is to find a maximum value $\lambda_0$ for which the predictions from a model trained on the data we have will be so regularized that they are all very close to zero relative to the maximum absolute value the outcome takes. Any further regularization past this point will not substantially change the out-of-sample predictions so we do not need to evaluate validation error for $\lambda > \lambda_0$. Once we have this value we can generate a (log-scale) grid of values in $[0,\lambda_0]$ to validate over or perform a (bounded) adaptive search.

Since we only care about the prediction function we have in hand we need to consider the training set fixed and we denote it $x_1 \dots x_n$ and $[y_1 \dots y_n]^\top = \bm Y$. Let $\hat f_\lambda$ be the kernel ridge regression learned from these data when using regularization $\lambda$. Given a small $\epsilon > 0$ our formal task is to find \(\lambda_0\) such that $\|\hat f_\lambda(X)\|_\infty < \epsilon \|Y\|_\infty$ for all $\lambda \ge \lambda_0$.

We start with the following expression for \(\hat f_\lambda(X)\):
\[
\hat f_\lambda(X) = {\bm k(X)} \underbrace{(\bm K + \lambda I)^{-1} \bm Y}_{\alpha}
\]
where we have the fixed matrix $\bm K_{ij} = K(x_i, x_j)$ and random vector ${\bm k(X)}_i = K(X,x_i)$ with $X$ a random test point. We have also denoted the fixed ``coefficients'' with $\alpha$.

From Cauchy-Schwarz we know 
$
|\hat f_\lambda(X)| 
= |{\bm k} \alpha| 
\leq  \|{\bm k}\|_2 \|\alpha\|_2
.$
Now we can take the supremum on both sides, giving  
$
\|\hat f_\lambda(X)\|_\infty 
\leq  \sup_{X} \|{\bm k(X)}\|_2 \|\alpha\|_2
.$
To bound the norm of the coefficients we express it in terms of the product of the operator norm and norm of the argument
$
\|\alpha\|_2 \leq \|(\bm K + \lambda I)^{-1}\| \|\bm Y\|_2
.$
For the operator norm of the positive-definite matrix $(\bm K + \lambda I)^{-1}$ we have
\[
\|(\bm K + \lambda I)^{-1}\| = \text{eig}_1\left((\bm K + \lambda I)^{-1}\right)
= \left(\text{eig}_n(\bm K + \lambda I)\right)^{-1}
= \left(\text{eig}_n \bm K + \lambda\right)^{-1}
\]
where $\text{eig}_k A$ denotes the $k$th-largest eigenvalue of a matrix $A$. 

Putting it all together, we have
$$
\|\hat f_\lambda(X)\| 
\le 
\sup_{X} \|{\bm k(X)}\|_2
\left(\text{eig}_n \bm K + \lambda\right)^{-1}
\|\bm Y \|_2
$$
which we would like to be to be smaller than $\epsilon \|Y\|_\infty$ to achieve our goal. Solving for \(\lambda\) we get

$$
\lambda
\ge 
\frac
{\sup_{X} \|{\bm k(X)}\|_2 \|\bm Y\|_2}
{\epsilon \|Y\|_\infty} 
- \text{eig}_n \bm K
\implies
\|\hat f_\lambda(X)\|_\infty < \epsilon \|Y\|_\infty
.$$

In practice we cannot take the suprema so we replace these with the empirical maxima and set
$$
\lambda_0 
=
\frac
{\max_{i} \|{\bm K_i}\|_2 \|\bm Y\|_2}
{\epsilon \max_i |y_i|} 
- \text{eig}_n \bm K
$$
where $\bm K_i$ is the $i$th row of the kernel matrix $\bm K$. Everything in this expression can be calculated directly from the training data.



\end{document}